\documentclass{article} 
\usepackage{iclr2026_conference,times}

\usepackage{amsmath,amsfonts,bm}

\def\eqref#1{equation~\ref{#1}}

\def\1{\bm{1}}

\DeclareMathAlphabet{\mathsfit}{\encodingdefault}{\sfdefault}{m}{sl}
\SetMathAlphabet{\mathsfit}{bold}{\encodingdefault}{\sfdefault}{bx}{n}
\newcommand{\tens}[1]{\bm{\mathsfit{#1}}}

\usepackage{hyperref}
\usepackage{url}
\usepackage{graphicx}
\usepackage{wrapfig}
\usepackage{booktabs}
\usepackage{amsmath}
\usepackage{amssymb}
\usepackage{amsthm}
\usepackage{algorithm}
\usepackage{algorithmic}
\usepackage{multirow}
\usepackage{float}
\usepackage{appendix}
\usepackage{titletoc}
\usepackage{enumitem}
\usepackage{subfigure}
\usepackage{subcaption}
\usepackage{tabularx}

\usepackage[table]{xcolor}
\definecolor{citecolor}{HTML}{0071BC}
\definecolor{linkcolor}{HTML}{ED1C24}
\definecolor{mydarkblue}{rgb}{0,0.08,0.45}
\hypersetup{colorlinks=true, linkcolor=mydarkblue, citecolor=mydarkblue,urlcolor=mydarkblue}
\definecolor{LightCyan}{rgb}{0.4, 0.5, 1}
\definecolor{codeblue}{rgb}{0.58, 0.94, 0.85}

\theoremstyle{plain}
\newtheorem{theorem}{Theorem}
\newtheorem{assumption}{Assumption}
\newtheorem{remark}{Remark}

\title{Multi-Head Low-Rank Attention}

\author{Songtao Liu\textsuperscript{1}\thanks{Correspondence to: Songtao Liu $<$skl5761@psu.edu$>$.}~~~~Hongwu Peng\textsuperscript{2}~~~~Zhiwei Zhang\textsuperscript{1}~~~~Zhengyu Chen\textsuperscript{3}~~~~Yue Guo\textsuperscript{4} \\
\textsuperscript{1}The Pennsylvania State University~~~~
\textsuperscript{2}University of Connecticut\\
\textsuperscript{3}Carnegie Mellon University~~~~
\textsuperscript{4}University of California, Los Angeles
}

\iclrfinalcopy 
\begin{document}

\maketitle

\begin{abstract}
Long-context inference in large language models is bottlenecked by Key--Value (KV) cache loading during the decoding stage, where the sequential nature of generation requires repeatedly transferring the KV cache from off-chip High-Bandwidth Memory (HBM) to on-chip Static Random-Access Memory (SRAM) at each step. While Multi-Head Latent Attention (MLA) significantly reduces the total KV cache size, it suffers from a sharding bottleneck during distributed decoding via Tensor Parallelism (TP). Since its single latent head cannot be partitioned, each device is forced to redundantly load the complete KV cache for every token, consuming excessive memory traffic and diminishing TP benefits like weight sharding. In this work, we propose Multi-Head Low-Rank Attention (MLRA), which enables partitionable latent states for efficient 4-way TP decoding. Extensive experiments show that MLRA achieves state-of-the-art perplexity and downstream task performance, while also delivering a 2.8$\times$ decoding speedup over MLA. Code is available at \url{https://github.com/SongtaoLiu0823/MLRA}. Pretrained weights, along with the training and evaluation data, are available at \url{https://huggingface.co/Soughing/MLRA}.
\end{abstract}
\section{Introduction}
Inference-time scaling~\citep{jaech2024openai} is critical for large language models (LLMs) to produce high-quality responses. Both retrieval-augmented generation (RAG)~\citep{lewis2020retrieval} and long chain-of-thought (CoT) reasoning~\citep{wei2022chain} rely on maintaining long context before generating the final answer, substantially increasing the number of tokens that must be processed at each decoding step. Sequential token generation under Multi-Head Attention (MHA)~\citep{vaswani2017attention} requires reloading the Key--Value (KV) cache from high-bandwidth memory every step, so data movement~\citep{ivanov2021data,gholami2024ai}, not computation, dominates latency for long-context inference~\citep{sadhukhan2025magicdec}. The small amount of compute per step relative to this data movement leads to poor GPU utilization~\citep{he2024fastdecode,zadouri2025hardware}.

A series of recent studies~\citep{shazeer2019fast,hu2024multi,zadouri2025hardware,zhang2025tensor,zheng2025sas} have developed alternative attention mechanisms aimed at improving decoding efficiency and overall model quality. Multi-Head Latent Attention (MLA)~\citep{liu2024deepseek} compresses the KV cache into a latent head ($4.5 d_h$ per token). By absorbing the up-projection matrices into the queries during decoding, it delivers better efficiency compared with MHA. However, MLA is unfriendly to tensor parallelism (TP) because its single latent head cannot be sharded. In this work, we address the limitation that MLA does not support TP.

We first show that partitioning the MLA latent head and the NoPE~\citep{yang2025rope} KV up-projection matrices into four blocks makes the NoPE key and value equivalent to the sum of four block-wise projections. Motivated by this insight, we propose Multi-Head Low-Rank Attention (MLRA), which explicitly decomposes the latent head into four latent heads, independently up-projects each latent head to form NoPE KV, and sums the resulting attention outputs. This design naturally supports 4-way TP and reduces the per-device KV cache loading. Based on our 2.9B scale experiments, MLRA-4 achieves the lowest perplexity (13.672 vs. 13.727 for MLA and 14.139 for GQA) and highest zero-shot common-sense reasoning accuracy (58.84\% vs. 58.75\% for MLA and 57.89\% for GQA). Our kernel delivers a 1.05-1.26$\times$ speedup over GQA in long-context decoding.
\section{Background}
\subsection{Multi-Head Latent Attention}
All notation used in this paper is summarized in Appendix~\ref{appendix:notation}. Given a sequence of $n$ tokens with hidden states $\bm{H} \in \mathbb{R}^{n \times d}$, MLA derives the query and KV states as follows:
\begin{equation*}
\begin{split}
\bm{C}^{\text{Q}} = \operatorname{RMSNorm}\left(\bm{H} \bm{W}^{\text{DQ}}\right), \quad &\bm{W}^{\text{DQ}} \in \mathbb{R}^{d \times d_c^\prime},\\
\bm{C}^{\text{KV}} = \operatorname{RMSNorm}\left(\bm{H} \bm{W}^{\text{DKV}}\right), \quad
&\bm{W}^{\text{DKV}} \in \mathbb{R}^{d \times d_c},\\
\bm{K}^{\text{RoPE}} = \operatorname{RoPE}\left(\bm{H} \bm{W}^{\text{KR}}\right), \quad
&\bm{W}^{\text{KR}} \in \mathbb{R}^{d \times d_h^R},
\end{split}
\end{equation*}
where $d_c, d_c^{\prime} \ll h d_h$ denote the dimensions for KV and query latent states, respectively. The learnable down-projection matrices $\bm{W}^{\text{DQ}}$ and $\bm{W}^{\text{DKV}}$ produce the latent states $\bm{C}^{\text{Q}}$ and $\bm{C}^{\text{KV}}$, while $\bm{W}^{\text{KR}}$ generates the partial RoPE~\citep{su2024roformer} key, denoted as $\bm{K}^{\text{RoPE}}$. Following DeepSeek-V3~\citep{deepseekv3-2024}, we set $d_c = 4d_h$ and $d_h^R = 0.5d_h$ without loss of generality. Both $\bm{C}^{\text{KV}}$ and $\bm{K}^{\text{RoPE}}$ are cached during inference to optimize efficiency. Finally, MLA derives the $h$ attention heads for queries, NoPE keys, and values through the following up-projections:
\begin{equation*}
\begin{split}
\overline{\bm{Q}}^{\text{NoPE}}=\bm{C}^{\text{Q}} \bm{W}^{\text{UQ}}, \quad \overline{\bm{Q}}^{\text{RoPE}}=\operatorname{RoPE}\left(\bm{C}^{\text{Q}} \bm{W}^{\text{QR}}\right),& \quad \bm{W}^{\text{UQ}} \in \mathbb{R}^{d_c^{\prime} \times\left(h d_h\right)}, \quad \bm{W}^{\text{QR}} \in \mathbb{R}^{d_c^{\prime} \times\left(hd_h^R \right)}\\
\overline{\bm{K}}^{\text{NoPE}} = \bm{C}^{\text{KV}} \bm{W}^{\text{UK}},\quad
\overline{\bm{V}} = \bm{C}^{\text{KV}} \bm{W}^{\text{UV}},& \quad
\bm{W}^{\text{UK}}, \bm{W}^{\text{UV}} \in \mathbb{R}^{d_c \times (h d_h)},
\end{split}
\end{equation*}
where $\bm{W}^{\text{UQ}}$, $\bm{W}^{\text{UK}}$, and $\bm{W}^{\text{UV}}$ denote the learnable up-projection matrices. To facilitate multi-head computation, the resulting queries, NoPE keys, and values are reshaped into head-wise tensors:
\begin{equation*}
\begin{split}
\tens{Q}^{\text{NoPE}} &=
\operatorname{Reshape}\left(\overline{\bm{Q}}^{\text{NoPE}}, \, \left[n, \, h, \, d_h\right]\right), \quad \tens{Q}^{\text{RoPE}} =
\operatorname{Reshape}\left(\overline{\bm{Q}}^{\text{RoPE}}, \, \left[n, \, h, \, d_h^R\right]\right)\\
\tens{K}^{\text{NoPE}} &=
\operatorname{Reshape}\left(\overline{\bm{K}}^{\text{NoPE}}, \, \left[n, \, h, \, d_h\right]\right), \quad
\tens{V} =
\operatorname{Reshape}\left(\overline{\bm{V}}, \, \left[n, \, h, \, d_h\right]\right),
\end{split}
\end{equation*}
where $\tens{Q}^{\text{NoPE}}, \tens{K}^{\text{NoPE}}, \tens{V}\in\mathbb{R}^{n\times h\times d_h}$ and $\tens{Q}^{\text{RoPE}}\in\mathbb{R}^{n\times h\times d_h^R}$.
For head $i\in\{0,\dots,h-1\}$, we define the head-specific 2D slices by indexing into the respective 3D tensors:
\begin{equation*}
\tens{Q}^{\text{NoPE}}_{:, i, :} := \tens{Q}^{\text{NoPE}}\left[:, \, i, \, :\right], \
\tens{Q}^{\text{RoPE}}_{:, i, :} := \tens{Q}^{\text{RoPE}}\left[:, \, i, \, :\right], \
\tens{K}^{\text{NoPE}}_{:, i, :}  := \tens{K}^{\text{NoPE}}\left[:, \, i, \, :\right], \
\tens{V}_{:, i, :} := \tens{V}\left[:, \, i, \, :\right].
\end{equation*}
To incorporate positional information, MLA shares a common RoPE key $\bm{K}^{\text{RoPE}}$ across all attention heads. The final position-aware query and key for head $i$ are formed by concatenating their respective NoPE and RoPE components:
\begin{equation*}
\tens{Q}_{:, i, :} = \operatorname{Concat}\left(\left[\tens{Q}_{:, i, :}^{\text{NoPE}}, \, \tens{Q}_{:, i, :}^{\text{RoPE}}\right], \, \text{dim=1}\right),\quad
\tens{K}_{:, i, :} = \operatorname{Concat}\left(\left[\tens{K}_{:, i, :}^{\text{NoPE}}, \, \bm{K}^{\text{RoPE}}\right], \, \text{dim=1}\right).
\end{equation*}
\paragraph{Efficient Decoding.}
\label{appendix:mla_decoding} 
Recall that MLA utilizes up-projection matrices $\bm{W}^{\text{UK}}$ and $\bm{W}^{\text{UV}}$. We extract the $d_h$-column slices for head $i$ to define its head-specific projections:
\begin{equation*}
\bm{W}_{:, (i)}^{\text{UK}} := \bm{W}^{\text{UK}}\left[:, \, i d_h : (i+1) d_h\right], \quad
\bm{W}_{:, (i)}^{\text{UV}} := \bm{W}^{\text{UV}}\left[:, \, i d_h : (i+1) d_h\right].
\end{equation*}
We refer to the DeepSeek official inference implementation~\citep{deepseekv3-2024} to illustrate how to ``absorb'' up-projection matrices into the queries to avoid explicit KV materialization in MLA decoding. For the prefix sequence $\{0, \ldots, n-1\}$ with cached components $\bm{C}^{\text{KV}}$ and $\bm{K}^{\text{RoPE}}$, we define the head-wise up-projections by partitioning $\bm{W}^{\text{UK}}$ and $\bm{W}^{\text{UV}}$ into $h$ heads, $\{\bm{W}_{:,(i)}^{\text{UK}}\}_{i=0}^{h-1}$ and $\{\bm{W}_{:,(i)}^{\text{UV}}\}_{i=0}^{h-1}$. For the last prefix token at position $n-1$, let $\tens{Q}_{n-1, i, :}$ denote the query vector for the $i$-th attention head. To maintain variance during the dot-product operation, we apply the scaling factor $\tau = \frac{1}{\sqrt{d_h + d_h^R}}$ and compute the attention output for the token at position $n-1$ as follows:
\begin{equation*}
\begin{split}
\tens{O}_{n-1, i, :}
=&\operatorname{Softmax}\left(
\tau\tens{Q}_{n-1, i, :}^{\text{NoPE}}\left(\bm{C}^{\text{KV}}\bm{W}_{:, (i)}^{\text{UK}}\right)^{\top}
+\tau\tens{Q}_{n-1, i, :}^{\text{RoPE}}\left(\bm{K}^{\text{RoPE}}\right)^{\top}
\right)\left(\bm{C}^{\text{KV}}\bm{W}_{:, (i)}^{\text{UV}}\right),\\
=&\ \operatorname{Softmax}\left(
\tau \underbrace{\tens{Q}_{n-1, i, :}^{\text{NoPE}}\left(\bm{W}_{:, (i)}^{\text{UK}}\right)^{\top}}_{\tilde{\tens{Q}}_{n-1, i, :}^{\text{NoPE}}\in \mathbb{R}^{d_c}}
\left(\bm{C}^{\text{KV}}\right)^{\top}
+\tau \tens{Q}_{n-1, i, :}^{\text{RoPE}} \left(\bm{K}^{\text{RoPE}}\right)^{\top}
\right)\bm{C}^{\text{KV}}\bm{W}_{:, (i)}^{\text{UV}},
\end{split}
\end{equation*}
where $\tens{O}_{n-1, i, :}$ is the attention output for head $i$ at position $n-1$. We next present a three-step algorithm that leverages the associativity of matrix multiplication to avoid materializing the $h$ heads of NoPE keys and values, thereby optimizing the decoding efficiency.
\paragraph{Step 1 (Query-Side Weight Absorption).}
We first reorganize the NoPE key and value up-projection matrices into head-wise tensors:
\begin{equation*}
\tilde{\tens{W}}^{\text{UK}}_{i,:,:} := \left(\bm{W}^{\text{UK}}_{:,(i)}\right)^{\top}\in\mathbb{R}^{d_h\times d_c},
\qquad
\tilde{\tens{W}}^{\text{UV}}_{i,:,:} := \bm{W}^{\text{UV}}_{:,(i)}\in\mathbb{R}^{d_c\times d_h},
\end{equation*}
where $\tilde{\tens{W}}^{\text{UK}}\in \mathbb{R}^{h \times d_h \times d_c}$ and $\tilde{\tens{W}}^{\text{UV}}\in \mathbb{R}^{h \times d_c \times d_h}$. For the NoPE query at position $n-1$, $\tens{Q}_{n-1, :, :}^{\text{NoPE}}$, we absorb the up-projection weight tensor $\tilde{\tens{W}}^{\text{UK}}$ directly into the query via Einstein summation:
\begin{equation*}
\tilde{\tens{Q}}_{n-1, :, :}^{\text{NoPE}} = \text{einsum}\left(\texttt{"hp,hpc->hc"}, \, \tens{Q}_{n-1, :, :}^{\text{NoPE}}, \,  \tilde{\tens{W}}^{\text{UK}} \right), \quad p=d_h, \, c=d_c, \quad \tilde{\tens{Q}}_{n-1, :, :}^{\text{NoPE}} \in \mathbb{R}^{h\times d_c}.
\end{equation*}
\paragraph{Step 2 (MQA-Style Decoding on Latent KV Cache).}
Given the KV cache $\bm{C}^{\text{KV}}$ and $\bm{K}^{\text{RoPE}}$, we define the shared key and value tensors by concatenating and reshaping the latent representations as $\tilde{\tens{K}}= \operatorname{Reshape}\left(\operatorname{Concat}\left(\left[\bm{C}^{\text{KV}}, \, \bm{K}^{\text{RoPE}}\right], \, \text{dim=1}\right), \, \left[n, \, 1, \, d_c + d_h^R\right] \right) \in \mathbb{R}^{n \times 1 \times (d_c + d_h^R)}$ and $\tilde{\tens{V}} = \operatorname{Reshape}\left(\bm{C}^{\text{KV}}, \, \left[n, \, 1, \, d_c\right] \right) \in \mathbb{R}^{n \times 1 \times d_c}$. Under this formulation, decoding reduces to an MQA-style attention mechanism in which the attention logits (i.e., query--key inner products before softmax) are computed in a $(d_c+d_h^R)$-dimensional space using these shared KV states. Incorporating the concatenated query $\tilde{\tens{Q}}_{n-1, :, :}=\operatorname{Concat}\left(\left[\tilde{\tens{Q}}_{n-1, :, :}^{\text{NoPE}}, \, \tens{Q}_{n-1, :, :}^{\text{RoPE}}\right], \, \text{dim=1}\right) \in \mathbb{R}^{h\times (d_c+d_h^R)}$, the attention output is calculated as follows:
\begin{equation}
\label{eq:step-2 decoding}
\small
\tens{Z}_{n-1, :, :}=\operatorname{Attention}\left(\tilde{\tens{Q}}_{n-1, :, :}, \, \operatorname{RepeatInterleave}\left(\tilde{\tens{K}}, \, h, \, \text{dim}=1\right), \,  \operatorname{RepeatInterleave}\left(\tilde{\tens{V}}, \, h, \, \text{dim}=1\right)\right),
\end{equation}
where $\tens{Z}_{n-1, :, :}\in\mathbb{R}^{h \times d_c}$. FlashAttention-3~\citep{shah2024flashattention} and FlashMLA~\citep{flashmla2025} provide highly optimized kernels designed to implement the Step-2 decoding computation directly.
\paragraph{Step 3 (Output Up-Projection).}
Finally, the up-projection tensor maps the intermediate attention output to the final attention output:
\begin{equation*}
\tens{O}_{n-1, :, :} = \text{einsum}\left(\texttt{"hc,hcp->hp"}, \, \tens{Z}_{n-1, :, :}, \, \tilde{\tens{W}}^{\text{UV}} \right), \quad  c=d_c, \, p=d_h, \quad \tens{O}_{n-1, :, :} \in \mathbb{R}^{h\times d_h}.
\end{equation*}
\paragraph{Block Multiplications.}
\label{Sec:mla_block}
For each head $i$, we define the constituent sub-blocks $\bm{W}_{(b),(i)}^{\cdot} \in \mathbb{R}^{d_h \times d_h}$ by partitioning the up-projection matrices into $d_h$-sized row blocks for $b \in \{0, 1, 2, 3\}$:
\begin{equation*}
\begin{split}
    \bm{W}_{(b),(i)}^{\text{UK}} & := \bm{W}^{\text{UK}}\left[b d_h : (b+1) d_h, \, i d_h : (i+1) d_h\right], \\
    \bm{W}_{(b),(i)}^{\text{UV}} & := \bm{W}^{\text{UV}}\left[b d_h : (b+1)d_h, \, i d_h : (i+1) d_h\right].
\end{split}
\end{equation*}
Consequently, each head-specific up-projection matrix can be expressed as a vertical stack of these four row-blocks:
\begin{equation*}
\bm{W}_{:, (i)}^{\text{UK}} =
\begin{bmatrix}
\bm{W}_{(0),(i)}^{\text{UK}}\\
\bm{W}_{(1),(i)}^{\text{UK}}\\
\bm{W}_{(2),(i)}^{\text{UK}}\\
\bm{W}_{(3),(i)}^{\text{UK}}
\end{bmatrix}, \quad
\bm{W}_{:, (i)}^{\text{UV}} =
\begin{bmatrix}
\bm{W}_{(0),(i)}^{\text{UV}}\\
\bm{W}_{(1),(i)}^{\text{UV}}\\
\bm{W}_{(2),(i)}^{\text{UV}}\\
\bm{W}_{(3),(i)}^{\text{UV}}
\end{bmatrix}.
\end{equation*}
Similarly, we partition the KV latent matrix $\bm{C}^{\text{KV}} \in \mathbb{R}^{n \times d_c}$ into horizontal channel blocks $\bm{C}_{:, (b)}^{\text{KV}} := \bm{C}^{\text{KV}}\left[:, \, b d_h : (b+1)d_h\right]$, such that $\bm{C}^{\text{KV}} = \left[ \bm{C}_{:, (0)}^{\text{KV}}, \dots, \bm{C}_{:, (3)}^{\text{KV}} \right]$. This block decomposition allows the key and value projections for head $i$ to be reformulated as a sum of four sub-block products:
\begin{equation}
\label{eq:mla_block_recon}
\tens{K}_{:, (i), :}^{\text{NoPE}} = \sum_{b=0}^{3}\bm{C}_{:, (b)}^{\text{KV}}\bm{W}_{(b),(i)}^{\text{UK}}, \qquad \tens{V}_{:, (i), :} = \sum_{b=0}^{3}\bm{C}_{:, (b)}^{\text{KV}}\bm{W}_{(b),(i)}^{\text{UV}}.
\end{equation}
\subsection{Grouped Latent Attention}
\label{Sec:GLA-2}
Grouped Latent Attention (GLA-2)~\citep{zadouri2025hardware} bisects MLA's single latent head into two latent heads, using the first latent head $(\bm{C}_{:, (0)}^{\text{KV}},\bm{C}_{:, (1)}^{\text{KV}})$ for the first half of attention heads and the second latent head $(\bm{C}_{:, (2)}^{\text{KV}},\bm{C}_{:, (3)}^{\text{KV}})$ for the second half. We define the group-mapping function as:
\begin{equation}
\label{eq:gla-2_define_group}
\gamma(i)=
\begin{cases}
0, & i < h/2,\\
1, & i \ge h/2,
\end{cases}
\qquad
\bar{i}= i-\frac{\gamma(i)\,h}{2} \in \{0,\dots,h/2-1\}.
\end{equation}
Let $\bm{W}^{(\gamma(i)),\text{UK}},\bm{W}^{(\gamma(i)),\text{UV}}\in\mathbb{R}^{2d_h\times (h/2)\,d_h}$ denote the up-projection matrices for latent group $\gamma(i)\in\{0,1\}$. We extract the head-specific slices for head $i$ by indexing into these matrices:
\begin{equation*}
\bm{W}_{:, (i)}^{(\gamma(i)),\text{UK}}=\bm{W}^{(\gamma(i)),\text{UK}}\left[:, \, \bar{i} d_h : (\bar{i}+1) d_h\right],\ \
\bm{W}_{:,(i)}^{(\gamma(i)),\text{UV}}=\bm{W}^{(\gamma(i)),\text{UV}}\left[:, \, \bar{i} d_h : (\bar{i}+1) d_h\right],
\end{equation*}
where $\bm{W}_{:, (i)}^{(\gamma(i)),\text{UK}},\bm{W}_{:, (i)}^{(\gamma(i)),\text{UV}}\in\mathbb{R}^{2d_h\times d_h}$. To further facilitate block-wise computation, we partition these slices into $d_h$-row blocks $\bm{W}_{(b),(i)}^{(\gamma(i)),\cdot}$ for $b \in \{0, 1\}$, defined as:
\begin{equation*}
\begin{split}
    \bm{W}_{(b),(i)}^{(\gamma(i)),\text{UK}} & := \bm{W}^{(\gamma(i)),\text{UK}}\left[b d_h : (b+1) d_h, \, \bar{i} d_h : (\bar{i}+1) d_h\right], \\
    \bm{W}_{(b),(i)}^{(\gamma(i)),\text{UV}} & := \bm{W}^{(\gamma(i)),\text{UV}}\left[bd_h : (b+1) d_h, \, \bar{i} d_h : (\bar{i}+1) d_h\right],
\end{split}
\end{equation*}
where each block $\bm{W}_{(b),(i)}^{(\gamma(i)),\text{UK}}, \bm{W}_{(b),(i)}^{(\gamma(i)),\text{UV}} \in \mathbb{R}^{d_h \times d_h}$. This partitioning allows us to decompose the head-specific up-projection matrices into two row-wise blocks:
\begin{equation*}
\bm{W}_{:, (i)}^{(\gamma(i)),\text{UK}} =
\begin{bmatrix}
\bm{W}_{(0),(i)}^{(\gamma(i)),\text{UK}}\\
\bm{W}_{(1),(i)}^{(\gamma(i)),\text{UK}}
\end{bmatrix}, \qquad
\bm{W}_{:, (i)}^{(\gamma(i)),\text{UV}} =
\begin{bmatrix}
\bm{W}_{(0),(i)}^{(\gamma(i)),\text{UV}}\\
\bm{W}_{(1),(i)}^{(\gamma(i)),\text{UV}}
\end{bmatrix}.
\end{equation*}
Consequently, the NoPE key and value computations for head $i$ can be expressed as the summation of two block products:
\begin{equation}
\label{eq:gla2_block_recon}
\begin{split}
\tens{K}_{:, (i), :}^{\text{NoPE}} &= \bm{C}_{:, (2\gamma(i))}^{\text{KV}} \bm{W}_{(0),(i)}^{(\gamma(i)),\text{UK}} + \bm{C}_{:, (2\gamma(i)+1)}^{\text{KV}} \bm{W}_{(1),(i)}^{(\gamma(i)),\text{UK}}, \\
\tens{V}_{:, (i), :} &= \bm{C}_{:, (2\gamma(i))}^{\text{KV}} \bm{W}_{(0),(i)}^{(\gamma(i)),\text{UV}} + \bm{C}_{:, (2\gamma(i)+1)}^{\text{KV}} \bm{W}_{(1),(i)}^{(\gamma(i)),\text{UV}}.
\end{split}
\end{equation}

\section{Multi-Head Low-Rank Attention}
Building on the block decompositions in Sections~\ref{Sec:mla_block} and \ref{Sec:GLA-2}, we propose MLRA. By shifting the summation from KV computation to attention output, MLRA treats each block projection as an independent low-rank branch and sums their outputs. MLRA is illustrated in Figures~\ref{fig:mlra_2} and~\ref{fig:mlra_4}.
\subsection{MLRA-4}
By substituting the block-partitioned identities from Eq.~(\ref{eq:mla_block_recon}) into the attention mechanism, the output for head $i$ can be expressed as:
\begin{equation*}
\begin{split}
\tens{O}_{:, i, :}
=&\ \operatorname{Softmax}\left(
\tau\tens{Q}_{:, i, :}^{\text{NoPE}}\left(\sum_{b=0}^{3}\bm{C}_{:, (b)}^{\text{KV}}\bm{W}_{(b),(i)}^{\text{UK}}\right)^{\top}
+\tau\tens{Q}_{:, i, :}^{\text{RoPE}}\left(\bm{K}^{\text{RoPE}}\right)^{\top}
\right)\left(\sum_{b=0}^{3}\bm{C}_{:, (b)}^{\text{KV}}\bm{W}_{(b),(i)}^{\text{UV}}\right).
\end{split}
\end{equation*}
Motivated by Eq.~(\ref{eq:mla_block_recon}), we propose MLRA-4, which computes attention independently on each blockwise branch and sums the resulting outputs:
\begin{equation}
\label{eq:mlra4}
\tens{O}_{:, i, :}
=
\sum_{b=0}^{3}
\operatorname{Softmax}\left(
\tau\tens{Q}_{:, i, :}^{\text{NoPE}}\left(\bm{C}_{:, (b)}^{\text{KV}}\bm{W}_{(b),(i)}^{\text{UK}}\right)^{\top}
+\tau\tens{Q}_{:, i, :}^{\text{RoPE}}\left(\bm{K}^{\text{RoPE}}\right)^{\top}
\right)\left(\bm{C}_{:, (b)}^{\text{KV}}\bm{W}_{(b),(i)}^{\text{UV}}\right).
\end{equation}
\subsection{MLRA-2}
\label{Sec:MLRA-2}
Following the grouping logic of GLA-2 from Eq.~(\ref{eq:gla-2_define_group}) and substituting the block-wise identities from Eq.~(\ref{eq:gla2_block_recon}), the attention output for GLA-2 can be expressed as:
\begin{equation*}
\small
\begin{split}
\tens{O}_{:, i, :}
=&\ \operatorname{Softmax}\left(
\tau\tens{Q}_{:, i, :}^{\text{NoPE}}
\left(\bm{C}_{:, (2\gamma(i))}^{\text{KV}}\bm{W}_{(0),(i)}^{(\gamma(i)),\text{UK}}+\bm{C}_{:, (2\gamma(i)+1)}^{\text{KV}}\bm{W}_{(1),(i)}^{(\gamma(i)),\text{UK}}\right)^{\top}
+\tau\tens{Q}_{:, i, :}^{\text{RoPE}}\left(\bm{K}^{\text{RoPE}}\right)^{\top}
\right)\\
&\ \cdot
\left(\bm{C}_{:, (2\gamma(i))}^{\text{KV}}\bm{W}_{(0),(i)}^{(\gamma(i)),\text{UV}}+\bm{C}_{:, (2\gamma(i)+1)}^{\text{KV}}\bm{W}_{(1),(i)}^{(\gamma(i)),\text{UV}}\right).
\end{split}
\end{equation*}
Analogously, we derive MLRA-2 by moving the block summation outside the attention operator, yielding a sum of two branchwise attention outputs:
\begin{equation}
\label{eq:mlra2}
\small
\begin{split}
\tens{O}_{:, i, :}
&=\ \operatorname{Softmax}\left(
\tau\tens{Q}_{:, i, :}^{\text{NoPE}}\left(\bm{C}_{:, (2\gamma(i))}^{\text{KV}}\bm{W}_{(0),(i)}^{(\gamma(i)),\text{UK}}\right)^{\top}
+\tau\tens{Q}_{:, i, :}^{\text{RoPE}}\left(\bm{K}^{\text{RoPE}}\right)^{\top}
\right)\left(\bm{C}_{:, (2\gamma(i))}^{\text{KV}}\bm{W}_{(0),(i)}^{(\gamma(i)),\text{UV}}\right)\\
&+\operatorname{Softmax}\left(
\tau\tens{Q}_{:, i, :}^{\text{NoPE}}\left(\bm{C}_{:,(2\gamma(i)+1)}^{\text{KV}}\bm{W}_{(1),(i)}^{(\gamma(i)),\text{UK}}\right)^{\top}
+\tau\tens{Q}_{:, i, :}^{\text{RoPE}}\left(\bm{K}^{\text{RoPE}}\right)^{\top}
\right)\left(\bm{C}_{:, (2\gamma(i)+1)}^{\text{KV}}\bm{W}_{(1),(i)}^{(\gamma(i)),\text{UV}}\right).
\end{split}
\end{equation}
MLRA-2 and MLRA-4 differ primarily in their latent-to-head mapping and branching factor. In MLRA-2, each latent block is up-projected to serve $h/2$ heads, with the final output resulting from a two-branch summation. Conversely, MLRA-4 utilizes four latent blocks that are each up-projected to serve all $h$ heads, resulting in a four-branch summation. Despite these structural differences, both variants decompose the computation into independent branches that require only a final reduction. This architecture naturally facilitates 4-way TP decoding, reducing the per-head attention logit space to $1.5 d_h$ after absorption—a significant reduction compared to $4.5 d_h$ in MLA and $2.5 d_h$ in GLA-2.
\subsection{Scaling Query/Key--Value Latent States and Attention Output}
\label{sec:scale}
Recent work~\citep{team2025longcat} observes that the RoPE key $\left(\bm{K}^{\text{RoPE}}\right)$ can exhibit a significant variance mismatch relative to other attention components $\left(\tens{Q}, \, \tens{K}^{\text{NoPE}}, \, \tens{V}\right)$. This discrepancy arises in MLA because RMSNorm~\citep{zhang2019root} is applied to the latent states $\bm{H}\bm{W}^{\text{DQ}}$ and $\bm{H}\bm{W}^{\text{DKV}}$ prior to the up-projections that generate the final query, NoPE key, and value tensors. To formally investigate this variance instability, we introduce the following assumption:
\begin{assumption}
\label{assump:stats}
We assume that all elements of the weight matrices $\bm{W}^{\text{DQ}}$, $\bm{W}^{\text{DKV}}$, $\bm{W}^{\text{UQ}}$, $\bm{W}^{\text{QR}}$, $\bm{W}^{\text{UK}}$, $\bm{W}^{\text{KR}}$, and $\bm{W}^{\text{UV}}$ are independent and identically distributed (i.i.d.) random variables with zero mean and variance $\sigma_{w}^{2}$. Furthermore, these weight matrices are assumed to be mutually independent of the input signals at each layer. Finally, for the multi-branch of MLRA, we assume the attention outputs $\tens{O}_{(b), (i), :}$ originating from different latent blocks $b$ are mutually uncorrelated, implying that $\operatorname{Cov}\left(\tens{O}_{(a), (i), :}, \tens{O}_{(b), (i),  :}\right) \approx 0$ for all $a \neq b$.
\end{assumption}

\paragraph{RoPE Key Variance.}
Recall that MLA computes the RoPE key as $\bm{K}^{\text{RoPE}}=\operatorname{RoPE}\left(\bm{H}\bm{W}^{\text{KR}}\right)$. Let $\overline{\bm{K}}^{\text{RoPE}}_{t,u}:=(\bm{H}\bm{W}^{\text{KR}})_{t,u} =\sum_{m=1}^{d}\bm{H}_{t,m}\bm{W}^{\text{KR}}_{m,u}$. Since the hidden states $\bm{H}$ are RMS-normalized, $\mathbb{E}\!\left[(\bm{H}_{t,m})^2\right]\approx 1$.
With $\mathbb{E}\!\left[\bm{W}^{\text{KR}}_{m,u}\right]=0$ and
$\mathbb{E}\!\left[\left(\bm{W}^{\text{KR}}_{m,u}\right)^2\right]=\sigma_{w}^2$, we have
\begin{equation*}
\operatorname{Var}\!\left(\overline{\bm{K}}^{\text{RoPE}}_{t,u}\right)
=\sum_{m=1}^{d}\Big(
\mathbb{E}\!\left[(\bm{H}_{t,m})^2\right]\,
\mathbb{E}\!\left[(\bm{W}^{\text{KR}}_{m,u})^2\right]
-\big(\mathbb{E}[\bm{H}_{t,m}]\,\mathbb{E}[\bm{W}^{\text{KR}}_{m,u}]\big)^2
\Big)
\approx \sum_{m=1}^{d} 1\cdot \sigma_{w}^2
=d\sigma_{w}^2.
\end{equation*}
Since RoPE is an orthogonal transformation, it does not change the variance:
\begin{equation}
\operatorname{Var}\!\left(\bm{K}^{\text{RoPE}}\right)\approx d\sigma_{w}^2.
\label{eq:var_k_rope_short}
\end{equation}
\paragraph{NoPE Key Variance.}
Next, we derive the variance of the NoPE keys, $\overline{\bm{K}}^{\text{NoPE}} = \bm{C}^{\text{KV}}\bm{W}^{\text{UK}}$. By definition, the latent KV state $\bm{C}^{\text{KV}} = \operatorname{RMSNorm}(\bm{H}\bm{W}^{\text{DKV}})$ is constrained such that each element has approximately unit mean square, i.e., $\mathbb{E}\left[(\bm{C}^{\text{KV}}_{t, l})^2\right] \approx 1$. Considering an arbitrary entry $\overline{\bm{K}}^{\text{NoPE}}_{t, u} = \sum_{l=1}^{d_c} \bm{C}^{\text{KV}}_{t,l}\bm{W}^{\text{UK}}_{l,u}$, and assuming the weights $\bm{W}^{\text{UK}}$ are zero-mean with variance $\sigma_{w}^2$, the variance of the product is:
\begin{equation}
\operatorname{Var}\!\left(\overline{\bm{K}}^{\text{NoPE}}_{t,u}\right)
=\sum_{l=1}^{d_c}\Big(
\mathbb{E}\!\left[(\bm{C}^{\text{KV}}_{t,l})^2\right]\,
\mathbb{E}\!\left[(\bm{W}^{\text{UK}}_{l,u})^2\right]
-\big(\mathbb{E}[\bm{C}^{\text{KV}}_{t,l}]\,\mathbb{E}[\bm{W}^{\text{UK}}_{l,u}]\big)^2
\Big)
\approx \sum_{l=1}^{d_c} 1\cdot \sigma_{w}^2
=d_c\sigma_{w}^2.
\label{eq:var_k_nope_short}
\end{equation}
Because reshaping does not alter the underlying variance, $\operatorname{Var}\!\left(\tens{K}^{\text{NoPE}}\right)$ remains $d_c\sigma_{w}^2$. Extending this derivation to the remaining attention components, we obtain the variances for the value and query tensors as follows:
\begin{equation*}
\operatorname{Var}\!\left(\tens{V}\right)\approx d_c\sigma_{w}^2,
\qquad
\operatorname{Var}\!\left(\tens{Q}^{\text{NoPE}}\right)\approx d_c^\prime\sigma_{w}^2,
\qquad
\operatorname{Var}\!\left(\tens{Q}^{\text{RoPE}}\right)\approx d_c^\prime\sigma_{w}^2.
\end{equation*}
\paragraph{Variance Mismatch and Calibration.}
Comparing our variance derivations shows that $\frac{\operatorname{Var}\left(\bm{K}^{\text{RoPE}}\right)}{\operatorname{Var}\left(\tens{K}^{\text{NoPE}}\right)} \approx \frac{d}{d_c}$, which explains the mismatch noted in~\citet{team2025longcat} when the latent dimension $d_c$ is much smaller than $d$. This is corrected by applying scaling factors to the latent states before the up-projection. Specifically, using $\alpha_{q}=\sqrt{d/d_c^\prime}$ and $\alpha_{kv}=\sqrt{d/d_c}$ ensures that the query and NoPE key components $\left(\tens{Q}, \, \tens{K}^{\text{NoPE}}\right)$ achieve parity with the variance of the RoPE key.

Adopting the variance-calibration strategy from \citet{team2025longcat}, we apply analogous rescaling to our MLRA variants. For MLRA-2 and MLRA-4, we rescale the query and KV latent states to ensure that the variance of NoPE queries and keys aligns with that of the partial RoPE components across all branches.
\begin{equation}
\label{eq:mlra4_rescale}
\bm{C}^{\text{Q}} \leftarrow \sqrt{\frac{d}{d_c^\prime}}\,\bm{C}^{\text{Q}}, 
\qquad 
\bm{C}^{\text{KV}} \leftarrow \sqrt{\frac{4d}{d_c}}\,\bm{C}^{\text{KV}}.
\end{equation}
Since summing the attention outputs from multiple branches alters the variance, we apply a rescaling factor to the attention outputs of MLRA-2 and MLRA-4 as follows:
\begin{equation}
\text{(MLRA-2)}\quad \tens{O}_{:, i, :} \leftarrow \frac{1}{\sqrt{2}}\,\tens{O}_{:, i, :},
\qquad
\text{(MLRA-4)}\quad \tens{O}_{:, i, :} \leftarrow \frac{1}{2}\,\tens{O}_{:, i, :}.
\end{equation}
\begin{remark}
Although these weight matrices are typically initialized with zero mean and a common variance, these conditions are not guaranteed during training. Consequently, Assumption~\ref{assump:stats} may not strictly hold in practice. However, the effectiveness of this scaling is best assessed through ablation studies, the results of which are detailed in Section~\ref{sec:ex_scale}.
\end{remark}
\begin{table}[t]
\caption{Comparison of parameters and KV cache loading among attention mechanisms. Some results are taken from~\citet{zhang2025tensor} and~\citet{zadouri2025hardware}. For attention mechanism details, refer to Appendix~\ref{appendix:attention_mechanism}.}
\vspace{-0.15in}
\begin{center}
\resizebox{\linewidth}{!}{
\begin{tabular}{ccccccc}
\toprule
\textbf{Method} & \textbf{\# Parameters} & \textbf{KV Cache} &
\begin{tabular}{@{}c@{}}\textbf{Loading Per Token}\\\textbf{Per Device (1 GPU)}\end{tabular} &
\begin{tabular}{@{}c@{}}\textbf{Loading Per Token}\\\textbf{Per Device (2 GPUs)}\end{tabular} &
\begin{tabular}{@{}c@{}}\textbf{Loading Per Token}\\\textbf{Per Device (4 GPUs)}\end{tabular} &
\begin{tabular}{@{}c@{}}\textbf{Loading Per Token}\\\textbf{Per Device (8 GPUs)}\end{tabular} \\
\midrule
MHA~\citep{vaswani2017attention}
    & $4dhd_h$
    & $2hd_{h}$
    & $128d_{h}$ & $64d_{h}$ & $32d_{h}$ & $16d_{h}$ \\
MQA~\citep{shazeer2019fast}
    & $2 d d_h\left(h+1\right)$
    & $2d_h$
    & $2d_h$ & $2d_h$ & $2d_h$ & $2d_h$ \\
GQA~\citep{ainslie2023gqa}
    & $2 d d_h \left(h+g\right)$
    & $2gd_h$
    & $16d_h$ & $8d_h$ & $4d_h$ & $2d_h$ \\
MLA~\citep{liu2024deepseek}
    & \begin{tabular}{@{}c@{}}$d_c^{\prime}\left(d+h d_h+h d_h^R\right)+d d_h^R$ \\ $+\, d_c\left(d+2 h d_h\right)+dhd_h$\end{tabular}
    & $d_c+d_h^R$
    & $4.5d_h$ & $4.5d_h$ & $4.5d_h$ & $4.5d_h$ \\
MFA~\citep{hu2024multi}
    & \begin{tabular}{@{}c@{}}$d_c^{\prime}\left(d+h\cdot 2d_h\right)$ \\ $+2d\cdot2d_h+dh\cdot2d_h$\end{tabular}
    & $4d_h$
    & $4d_h$ & $4d_h$ & $4d_h$ & $4d_h$ \\
TPA~\citep{zhang2025tensor}
    & $d(\beta_{q}+2\beta_{kv})\left(h+d_h\right)+dhd_h$
    & $2\beta_{kv}\left(h+d_h\right)$
    & $6d_h$ & $5d_h$ & $4.5d_h$ & $4.25d_h$ \\
GLA-2~\citep{zadouri2025hardware}
    & \begin{tabular}{@{}c@{}}$d_c^{\prime}\left(d+h d_h+h d_h^R\right)+d d_h^R$ \\ $+\, d_c\left(d+h d_h\right)+dhd_h$\end{tabular}
    & $d_c+d_h^R$
    & $4.5d_h$ & $2.5d_h$ & $2.5d_h$ & $2.5d_h$ \\
GTA~\citep{zadouri2025hardware}
    & $dhd_h+dgd_h+dd_h^R+dhd_h$
    & $gd_h+d_h^R$
    & $8.5d_h$ & $4.5d_h$ & $2.5d_h$ & $1.5d_h$ \\
\midrule
MLRA-2
    & \begin{tabular}{@{}c@{}}$d_c^{\prime}\left(d+h d_h+h d_h^R\right)+d d_h^R$ \\ $+\, d_c\left(d+h d_h\right)+dhd_h$\end{tabular}
    & $d_c+d_h^R$
    & $4.5d_h$ & $2.5d_h$ & $1.5d_h$ & $1.5d_h$ \\
MLRA-4
    & \begin{tabular}{@{}c@{}}$d_c^{\prime}\left(d+h d_h+h d_h^R\right)+d d_h^R$ \\ $+\, d_c\left(d+2 h d_h\right)+dhd_h$\end{tabular}
    & $d_c+d_h^R$
    & $4.5d_h$ & $2.5d_h$ & $1.5d_h$ & $1.5d_h$ \\
\bottomrule
\end{tabular}
\label{tab:attention_compare}
}
\end{center}
\end{table}
\subsection{Analysis} 
\paragraph{KV Cache.}
We evaluate the per-device KV cache loading under various TP configurations using Qwen3-32B~\citep{qwen3} and Kimi-K2~\citep{Bai2025KimiKO} as base architectures. Qwen3-32B utilizes GQA with 64 query heads and 8 KV heads ($d_h=128$), setting $g=8$ KV heads. Kimi-K2 adopts MLA with 64 heads and a partial RoPE dimension ($d_h^{R}=64$). For TPA, we maintain the original configuration with $\beta_{kv}=2$. Table~\ref{tab:attention_compare} summarizes the per-device KV cache loading under TP as the number of devices increases. To support TP, the official MLA decoding implementation, FlashMLA~\citep{flashmla2025}, distributes up-projection matrices across devices by head. However, this approach leads to redundant KV cache loading; as a result, the per-device loading remains constant at $4.5d_h$ regardless of the TP degree. TPA constructs its $h$ key-value heads as linear combinations of $\beta_{kv}$ shared heads. It supports TP only for the combination coefficients, while the shared heads must be redundantly loaded by each device. Consequently, the per-device KV cache loading is $4d_h + \frac{2d_h}{\varphi}$, where $\varphi$ denotes the number of TP devices. GLA-2 partially addresses this by partitioning the latent head into two smaller latent heads, reducing the per-device loading to $2.5d_h$ under 2-way TP. Notably, for MLA with TP $> 1$ and GLA-2 with TP $> 2$, the KV cache loading becomes invariant to the number of devices due to sharding constraints, causing the per-device loading to plateau at $4.5 d_h$ and $2.5 d_h$, respectively. While GQA and GTA require 8-way TP to reduce the per-device loading to $2d_h$ and $1.5d_h$, MLRA achieves $1.5d_h$ with only 4-way TP.
\paragraph{Attention Decoding Arithmetic Intensity.}
Arithmetic intensity (AI)~\citep{williams2009roofline}, defined as the ratio of floating-point operations to memory access (FLOPs/byte), serves as a critical metric for identifying whether a workload is memory-bound or compute-bound~\citep{zadouri2025hardware}. Given that the context length $n$ is the dominant factor in long-context decoding, we evaluate the arithmetic intensity (AI) of various attention mechanisms, with the results summarized in Table~\ref{tab:arithmetic-intensity}. MLRA-2 and MLRA-4 achieve AI values of $h$ and $2h$, respectively, maintaining the high arithmetic intensity characteristic of MLA and GLA-2. By significantly increasing the compute-to-memory ratio, MLRA shifts the decoding process away from the HBM bandwidth ceiling toward a compute-limited regime.
\begin{table}[t]
\caption{Comparison of attention decoding arithmetic intensity among attention mechanisms.}
\vspace{-0.1in}
\resizebox{\linewidth}{!}{
\centering
\setlength{\tabcolsep}{3pt}
\begin{tabular}{ccccccccccc}
\toprule
\textbf{Method} & \textbf{MHA} & \textbf{MQA}  & \textbf{GQA} & \textbf{MLA} & \textbf{MFA} & \textbf{TPA} & \textbf{GLA-2} & \textbf{GTA} &   \textbf{MLRA-2} &  \textbf{MLRA-4} \\
\midrule
\multirow{2}{*}{\parbox{2cm}{\centering \textbf{Arithmetic} \\ \textbf{Intensity}}} &
$\frac{4nhd_h}{4nhd_h}$ & 
$\frac{4nhd_h}{4nd_h}$ & 
$\frac{4nhd_h}{4ngd_h}$ & 
$\frac{4nhd_c+2nh d_h^R}{2 n\left(d_c+d_h^R\right)}$ & 
$\frac{4nh\cdot 2d_h}{4n\cdot2d_h}$ & 
$\frac{4 nh \beta_{kv} d_h+4 nh d_h}{4n\beta_{kv}\left(h+d_h\right)}$ &
$\frac{2nh \frac{d_c}{2}+nh d_h^R}{2 n\left(\frac{d_c}{2}+d_h^R\right)}$ &
$\frac{4 n h d_h}{2 n \left(g d_h+d_h^R\right)}$ & 
$\frac{2 nh \frac{d_c}{4}+nhd_h^R}{2 n\left(\frac{d_c}{4}+d_h^R\right)}$ &
$\frac{4 nh \frac{d_c}{4}+2 nh  d_h^R}{2 n\left(\frac{d_c}{4}+d_h^R\right)}$ \\[3pt]
& $\approx 1$ & $\approx h$  & $\approx \frac{h}{g}$ & $\approx 2h$ & $\approx h$ & $\approx \frac{\left(1+\beta_{kv}\right)hd_h}{\beta_{kv}\left(h+d_h\right)}$ & $\approx h$ & $\approx \frac{2h}{g}$ & $\approx h$&$\approx 2h$ \\[4pt]
\bottomrule
\end{tabular}
}
\label{tab:arithmetic-intensity}
\end{table}

\section{Experiments}
\subsection{Experimental Setup}
\paragraph{Model Configuration.} 
We adopt the Llama-3~\citep{Dubey2024TheL3} architecture (Appendix~\ref{appendix:llama3}) and compare MLRA against the following attention mechanism baselines: MHA~\citep{vaswani2017attention}, MQA~\citep{shazeer2019fast}, GQA~\citep{ainslie2023gqa}, MLA~\citep{liu2024deepseek}, MFA~\citep{hu2024multi}, TPA~\citep{zhang2025tensor}, GLA-2~\citep{zadouri2025hardware}, GLA-4, and GTA~\citep{zadouri2025hardware}. GLA-4 compresses the KV cache into four latent heads. We initialize the MHA baseline with the Llama3.2-3B~\citep{Dubey2024TheL3} configuration and use it as our parameter-count reference. Following \citet{zadouri2025hardware}, for each other attention variant, we adjust the Feed-Forward Network (FFN) intermediate dimension to match the total number of parameters of this MHA baseline. Full details of the architectural hyperparameters are provided in Appendix~\ref{appendix:main_parameters}. All models are implemented on top of the \texttt{nanoGPT}~\citep{NanoGPT} codebase.
\paragraph{Pretraining Configuration.} 
We pretrain all models at the 2.9B-parameter scale on FineWeb-Edu-100B~\citep{penedo2024the}. Each model is pretrained from scratch on 98.3B tokens, with an additional 0.1B tokens for validation. We use the GPT-2 tokenizer with a vocabulary size of 50,304 and follow the standard GPT-3 pretraining setup. We use AdamW~\citep{loshchilov2019decoupled} as the optimizer with $\left(\beta_1, \beta_2\right)=(0.9,0.95), \epsilon=10^{-8}$, weight decay 0.1, and gradient clipping at 1.0. The learning rate is linearly warmed up for the first 2,000 steps, then annealed with cosine decay~\citep{loshchilov2017sgdr} to $10 \%$ of the peak. Peak learning rate is $1.6 \times 10^{-4}$. We train with a context length of 2,048 tokens and a global batch size of 480 sequences (983,040 tokens per step, $\approx$1.0M) for 100,000 steps. All models are pretrained on 8 NVIDIA H100 80GB GPUs.
\paragraph{Evaluation Benchmark.} 
In addition to evaluating the perplexity from the FineWeb-Edu validation dataset, we evaluate our models on six additional datasets: Wikipedia, C4~\citep{raffel2020exploring}, the Pile~\citep{gao2020pile}, RefinedWeb~\citep{penedo2023the}, Cosmopedia~\citep{benallal2024cosmopedia}, and FineWeb~\citep{penedo2024the} using 0.1B tokens per dataset. We evaluate zero-shot performance on common-sense reasoning benchmarks, including ARC-Easy (ARC-E)~\citep{yadav-etal-2019-quick}, ARC-Challenge (ARC-C), OpenBookQA~\citep{mihaylov-etal-2018-suit}, BoolQ~\citep{clark-etal-2019-boolq}, HellaSwag~\citep{zellers2019hellaswag}, Winogrande~\citep{sakaguchi2021winogrande}, and PIQA~\citep{bisk2020piqa}, using the \texttt{lm-evaluation-harness}~\citep{eval-harness} package. We report normalized accuracy for ARC-E/C, OpenBookQA, HellaSwag, and PIQA, with standard accuracy for all other tasks.
\subsection{Preliminary Ablation Results}
\begin{figure}[t]
    \centering
    \begin{minipage}{0.479\textwidth}
        \centering
        \includegraphics[width=\linewidth]{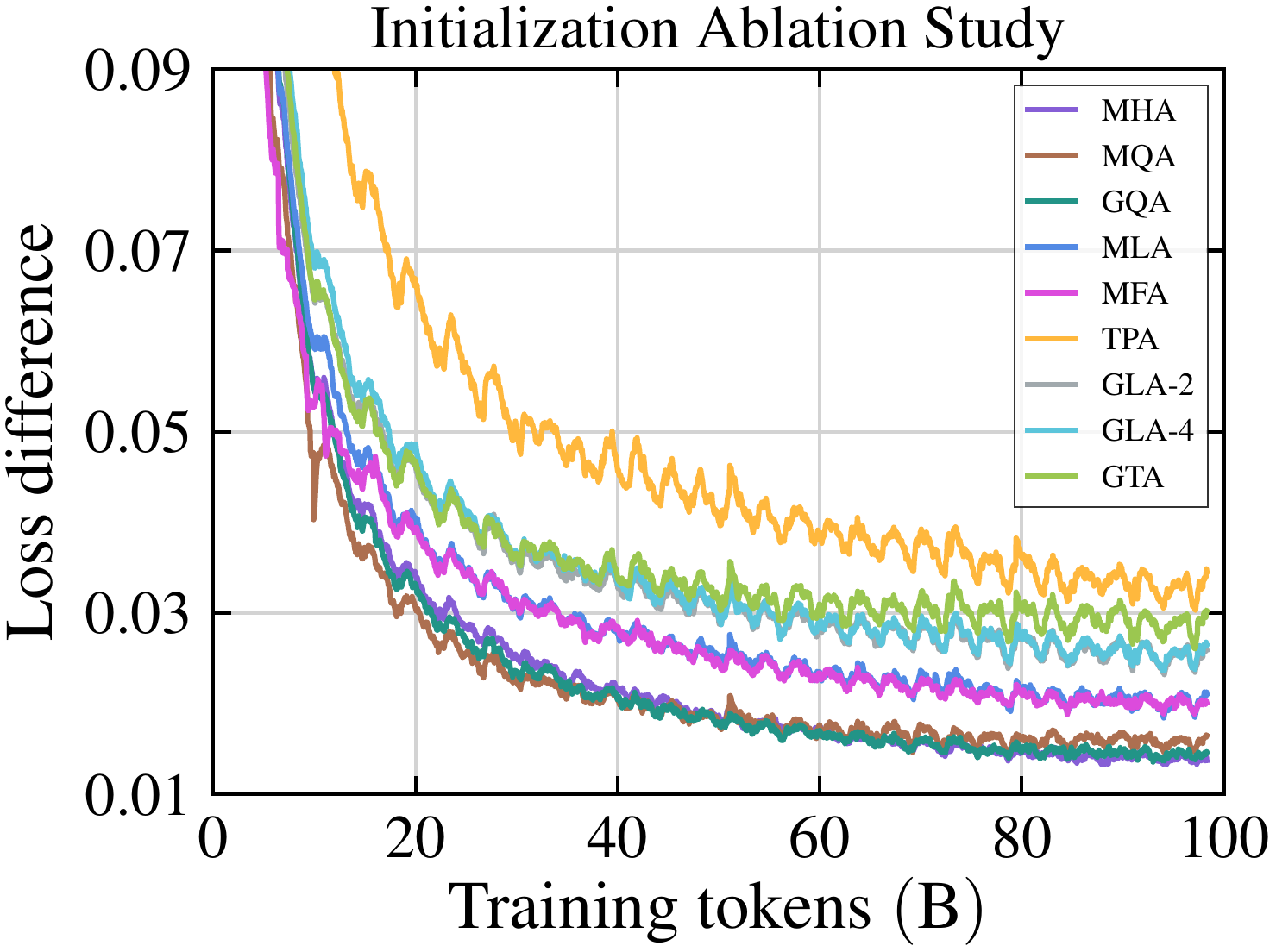}
        \caption{Loss difference between $\mathcal{N}(0, \sigma=0.02)$ and zero initialization, calculated by subtracting the loss of the latter from the former.}
        \label{fig:ablation_initialization}
    \end{minipage}%
    \hfill
    \begin{minipage}{0.479\textwidth}
        \centering
        \includegraphics[width=\linewidth]{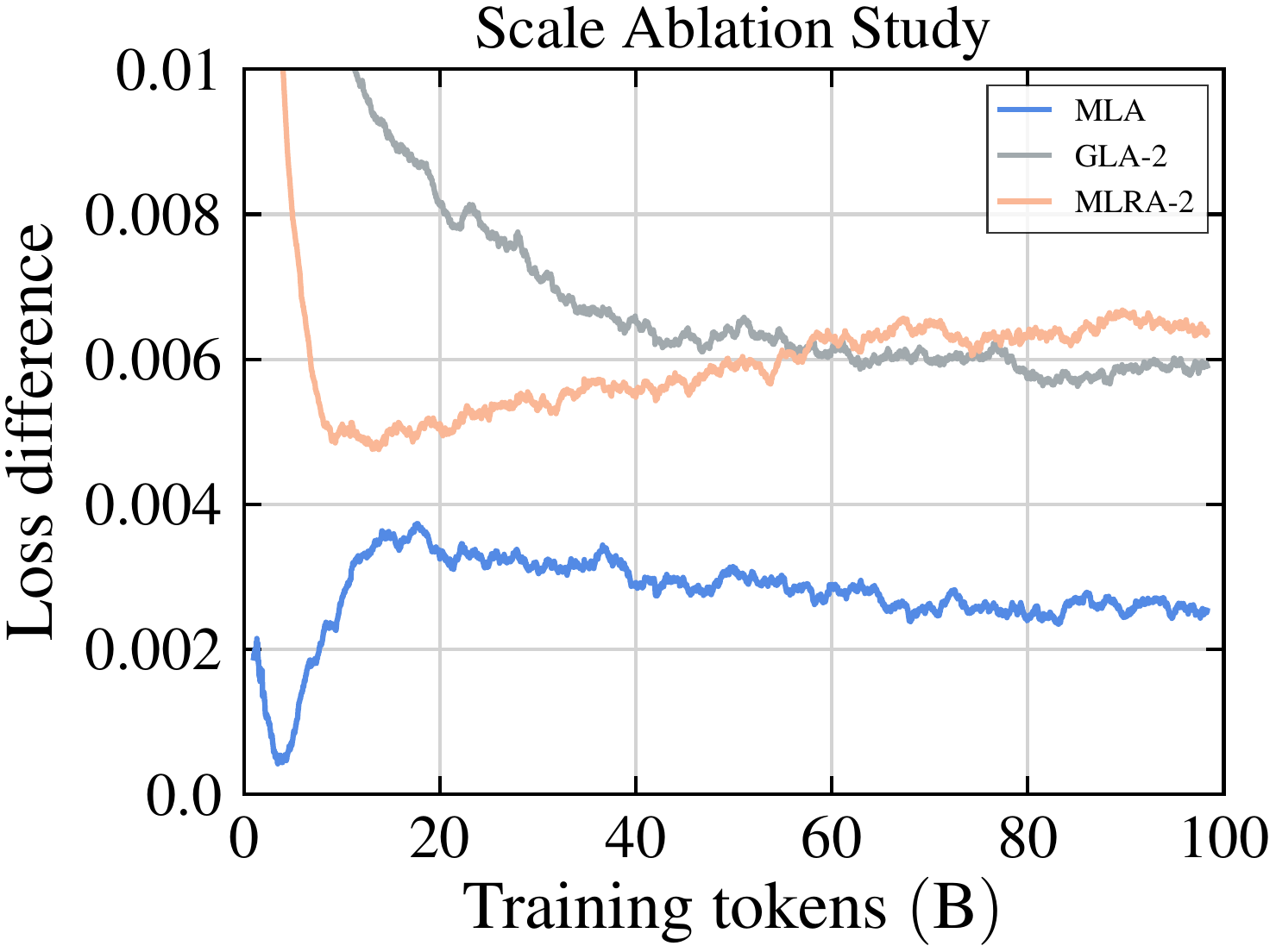}
        \caption{Loss difference between models without and with scaling, calculated by subtracting the loss of the latter from the former.}
        \label{fig:ablation_scale}
    \end{minipage}

    \vspace{1em}

    \begin{minipage}{0.479\textwidth}
        \centering
        \includegraphics[width=\linewidth]{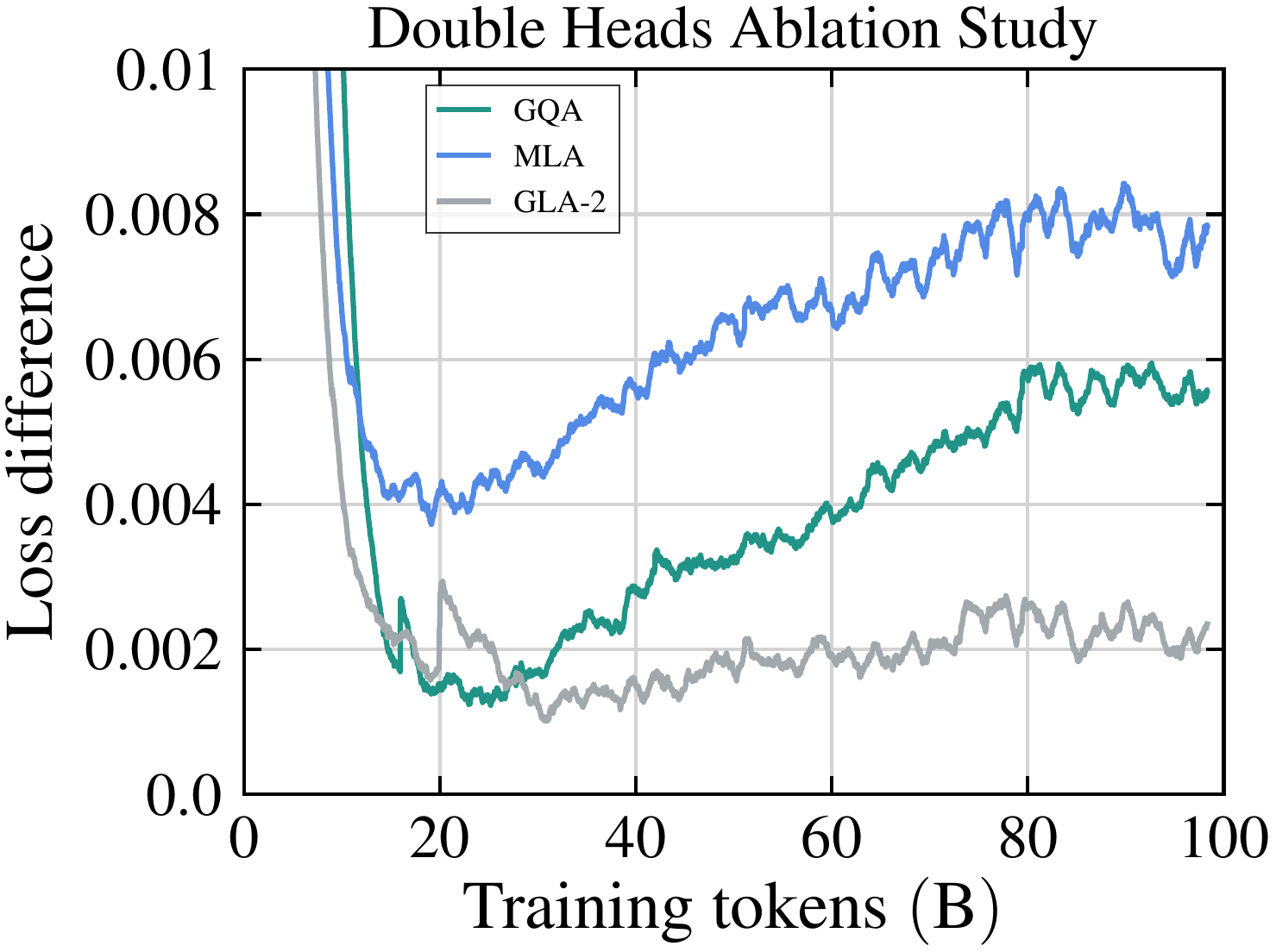}
        \caption{Loss difference between models with and without double heads, calculated by subtracting the loss of the latter from the former.}
        \label{fig:ablation_double_heads}
    \end{minipage}%
    \hfill
    \begin{minipage}{0.479\textwidth}
        \centering
        \includegraphics[width=\linewidth]{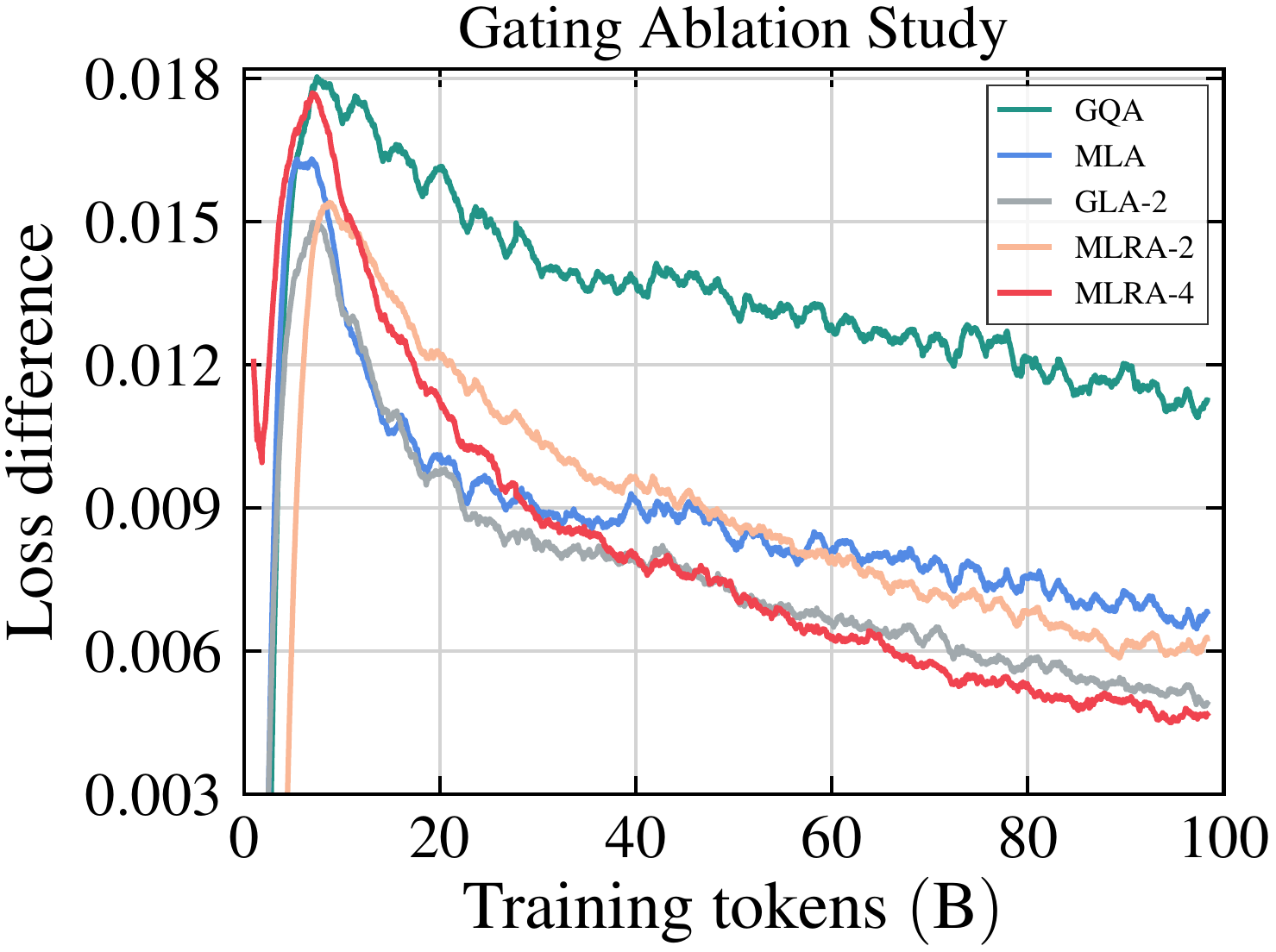}
        \caption{Loss difference between models without and with gating, calculated by subtracting the loss of the latter from the former.}
        \label{fig:ablation_gated}
    \end{minipage}
\end{figure}
\subsubsection{Initialization}
We follow the GPT~\citep{radford2018improving} initialization method, where all model weights are initialized using an $\mathcal{N}(0, \sigma=0.02)$ distribution. However, TPA employs zero initialization for the output projection parameters of the attention and FFN modules, which is an approach also explored in muP~\citep{yang2021tuning} and LoRA~\citep{hu2022lora}. To evaluate these different approaches, we conduct an ablation study comparing zero initialization against $\mathcal{N}(0, \sigma=0.02)$ for the output projection parameters across all models. It is important to note that for MLA, GLA-2, and GLA-4, we apply scaling as discussed in Section~\ref{sec:scale}. As illustrated in Figure~\ref{fig:ablation_initialization} and Table~\ref{tab:ablation_initialization}, the results for loss and perplexity demonstrate that zero initialization outperforms the $\mathcal{N}(0, \sigma=0.02)$ distribution. Unless otherwise specified, all models in the following experiments utilize this zero initialization.
\subsubsection{Scaling}
\label{sec:ex_scale}
We evaluate the effectiveness of the scaling on MLA, GLA-2, and MLRA-2. As illustrated in Figure~\ref{fig:ablation_scale}, all three models exhibit improved convergence when scaling is applied. As shown in Table~\ref{tab:ablation_scale}, all three models achieve lower average perplexity after scaling. Notably, MLA and GLA-2 show more substantial improvements, while MLRA-2 yields a marginal gain. Unless otherwise specified, MLA, GLA, and MLRA in the following experiments utilize this scaling.
\subsubsection{Double Heads}
While MLRA-2 and MLRA-4 do not increase the number of query heads, their multi-branch design increases the number of attention heads involved in computation. Consequently, we double the number of attention heads for GQA, MLA, and GLA-2 while keeping the KV-cache size fixed to evaluate whether this increase contributes to performance gains. To maintain a constant parameter budget during this adjustment, we reduce the FFN intermediate sizes; the corresponding architectural hyperparameters are detailed in Appendix~\ref{appendix:double_heads_parameters}. As illustrated by the loss curves in Figure~\ref{fig:ablation_double_heads} and the results in Table~\ref{tab:ablation_double_heads}, doubling the number of attention heads leads to higher loss and fails to decrease perplexity across all three models. These findings suggest that doubling the number of attention heads does not yield any measurable performance improvement. Unless otherwise specified, GQA, MLA, and GLA use the default head count (no head doubling).
\begin{table}[t]
\caption{Validation perplexity (lower is better) across seven datasets: Wikipedia, C4, Pile, RefinedWeb, Cosmopedia, FineWeb, and FineWeb-Edu. The best results are indicated in \textbf{bold}, while the second best are \underline{underlined}.}
\vspace{-0.1in}
\resizebox{\linewidth}{!}{
\centering
\setlength{\tabcolsep}{5pt}
\begin{tabular}{c | c c c c c c c | c}
\toprule
\textbf{Method} & \textbf{Wikipedia} & \textbf{C4} & \textbf{Pile} & \textbf{RefinedWeb} & \textbf{Cosmopedia} & \textbf{FineWeb} & \textbf{FineWeb-Edu} & \textbf{Avg} \\
\midrule
MHA    & 14.624          & 16.575          & \textbf{12.929} & 18.698          & 9.102           & 15.656          & 9.434           & 13.860          \\
MQA    & 15.134          & 16.837          & 14.008          & 19.202          & 9.484           & 15.942          & 9.533           & 14.306          \\
GQA    & 15.057          & 16.628          & 13.758          & 18.885          & 9.504           & 15.713          & 9.427           & 14.139          \\
MLA    & 14.567 & 16.345       & \underline{12.965} & 18.523       & \underline{8.966} & 15.440 & 9.284 & \underline{13.727} \\
MFA    & 15.693          & 16.738          & 13.903          & 19.125          & 9.423           & 15.815          & 9.506           & 14.315          \\
TPA    & 14.789          & 16.622          & 13.333          & 18.971          & 9.130           & 15.717          & 9.333           & 13.985          \\
GLA-2  & 14.605          & \underline{16.323} & 13.225       & \underline{18.509} & 9.118          & \underline{15.424}          & 9.249           & 13.779          \\
GLA-4  & \underline{14.547}          & 16.436          & 13.229          & 18.578          & 9.076           & 15.535          & 9.307           & 13.815          \\
GTA    & 14.733          & 16.599          & 13.402          & 18.924          & 9.129           & 15.672          & 9.346           & 13.972          \\
\midrule
MLRA-2 & 14.615          & 16.342          & 13.236          & 18.602          & 9.153           & 15.439          & \underline{9.242}           & 13.804          \\
MLRA-4 & \textbf{14.407} & \textbf{16.286} & 13.124          & \textbf{18.398} & \textbf{8.937}  & \textbf{15.361} & \textbf{9.193}  & \textbf{13.672} \\
\bottomrule
\end{tabular}
}
\label{tab:ppl}
\end{table}
\begin{table}[t]
\caption{Downstream evaluation on seven common-sense reasoning benchmarks: ARC-E, ARC-C, OpenBookQA, BoolQ, HellaSwag, Winogrande, and PIQA. ARC-E/C, OpenBookQA, HellaSwag, and PIQA use normalized accuracy (\%); others use standard accuracy (\%). Best is \textbf{bold}; second best is \underline{underlined}.}
\vspace{-0.1in}
\resizebox{\linewidth}{!}{
\centering
\setlength{\tabcolsep}{8pt}
\begin{tabular}{c | c c c c c c c | c}
\toprule
\textbf{Method} & \textbf{ARC-E} & \textbf{ARC-C} & \textbf{OpenBookQA} & \textbf{BoolQ} & \textbf{HellaSwag} & \textbf{Winogrande} & \textbf{PIQA} & \textbf{Avg} \\
\toprule
MHA    & \underline{69.11} & 39.16          & 40.80          & 62.26          & 60.82          & 57.62          & 74.86          & 57.81          \\
MQA    & 66.16          & 38.31          & 41.80          & 62.05          & 60.24          & 59.83          & 74.48          & 57.55          \\
GQA    & 67.13          & 39.42          & 42.00          & 63.39          & 61.29          & 56.91          & 75.08          & 57.89          \\
MLA    & 68.22          & 39.16          & \underline{42.60} & \textbf{64.10} & 61.39          & \underline{60.06} & \textbf{75.68} & \underline{58.75} \\
MFA    & 69.02          & 39.93          & 42.40          & 63.49          & 60.72          & 58.96          & 75.19          & 58.53          \\
TPA    & \textbf{69.44} & 40.61          & 41.60          & 60.03          & 61.02          & 57.85          & 74.54          & 57.87          \\
GLA-2  & 68.01          & 40.19          & 40.60          & \underline{63.94} & 61.54 & 58.41          & 75.41 & 58.30          \\
GLA-4  & 68.77          & 41.04          & 41.20          & 61.96          & \underline{61.61}          & 58.09          & 74.65          & 58.19          \\
GTA    & 67.97          & 39.68          & \underline{42.60} & 59.72          & 61.03          & 58.48          & 75.14          & 57.80          \\
\midrule
MLRA-2 & 67.89          & \textbf{42.24} & 42.00          & 61.65          & 61.49          & 59.98          & \underline{75.52}         & 58.68          \\
MLRA-4 & 67.63          & \underline{41.38} & \textbf{43.00} & 61.74          & \textbf{62.16} & \textbf{61.48} & 74.48          & \textbf{58.84} \\
\bottomrule
\end{tabular}
}
\label{tab:common_sense_reasoning}
\end{table} 
\begin{table}[t]
\caption{Validation perplexity (lower is better) w/ gating across seven datasets. The best results are indicated in \textbf{bold}, while the second best are \underline{underlined}.}
\vspace{-0.1in}
\resizebox{\linewidth}{!}{
\centering
\setlength{\tabcolsep}{5pt}
\begin{tabular}{c | c c c c c c c | c}
\toprule
\textbf{Method} & \textbf{Wikipedia} & \textbf{C4} & \textbf{Pile} & \textbf{RefinedWeb} & \textbf{Cosmopedia} & \textbf{FineWeb} & \textbf{FineWeb-Edu} & \textbf{Avg} \\
\midrule
GQA w/ gating    & \underline{14.362} & 16.484          & 13.113          & 18.696          & 9.098           & 15.581          & 9.311           & 13.806          \\
MLA w/ gating   & \textbf{14.346}    & 16.297          & \textbf{12.866} & 18.456          & 8.936 & 15.383        & 9.212 & \underline{13.642} \\
GLA-2 w/ gating  & 14.597             & 16.286 & \underline{12.997} & 18.473       & 8.986           & 15.369 & 9.198        & 13.701          \\
\midrule
MLRA-2 w/ gating & 14.424             & \underline{16.252} & 13.017          & \underline{18.407} & \underline{8.924} & \underline{15.351} & \underline{9.180} & 13.651 \\
MLRA-4 w/ gating & 14.431             & \textbf{16.170} & 13.073          & \textbf{18.386} & \textbf{8.874}  & \textbf{15.266} & \textbf{9.148}  & \textbf{13.621} \\
\bottomrule
\end{tabular}
}
\label{tab:ppl_gated}
\end{table}
\subsection{Main Results}
As shown in Table~\ref{tab:ppl}, MLRA-4 achieves the best average perplexity (13.672), outperforming all other models, including MLA (13.727). Notably, MLRA-4 also delivers the lowest perplexity on FineWeb-Edu (9.193). Furthermore, Table~\ref{tab:common_sense_reasoning} demonstrates that MLRA-4 attains the highest average zero-shot accuracy across all common-sense reasoning tasks. This consistent superiority of MLRA-4 over MLRA-2 across both evaluations highlights the benefits of increasing the number of branches.
\subsection{Gated Attention}
Following \citet{qiu2025gated}, we introduce a gating mechanism prior to the attention output projection (Appendix~\ref{appendix:gated_attention}). To maintain a constant parameter budget, we reduce the FFN intermediate size accordingly; detailed architectural hyperparameters are provided in Appendix~\ref{appendix:gated_parameters}. As illustrated in Figure~\ref{fig:ablation_gated}, all five models exhibit improved convergence with gating applied. As shown in Table~\ref{tab:ppl_gated}, gating consistently improves perplexity across all models, with MLRA-4 achieving the best overall average perplexity and MLRA-2 attaining performance comparable to MLA (13.651 vs. 13.642).
\begin{figure}[t]
    \centering
    \begin{minipage}{0.479\textwidth}
        \centering
        \includegraphics[width=\linewidth]{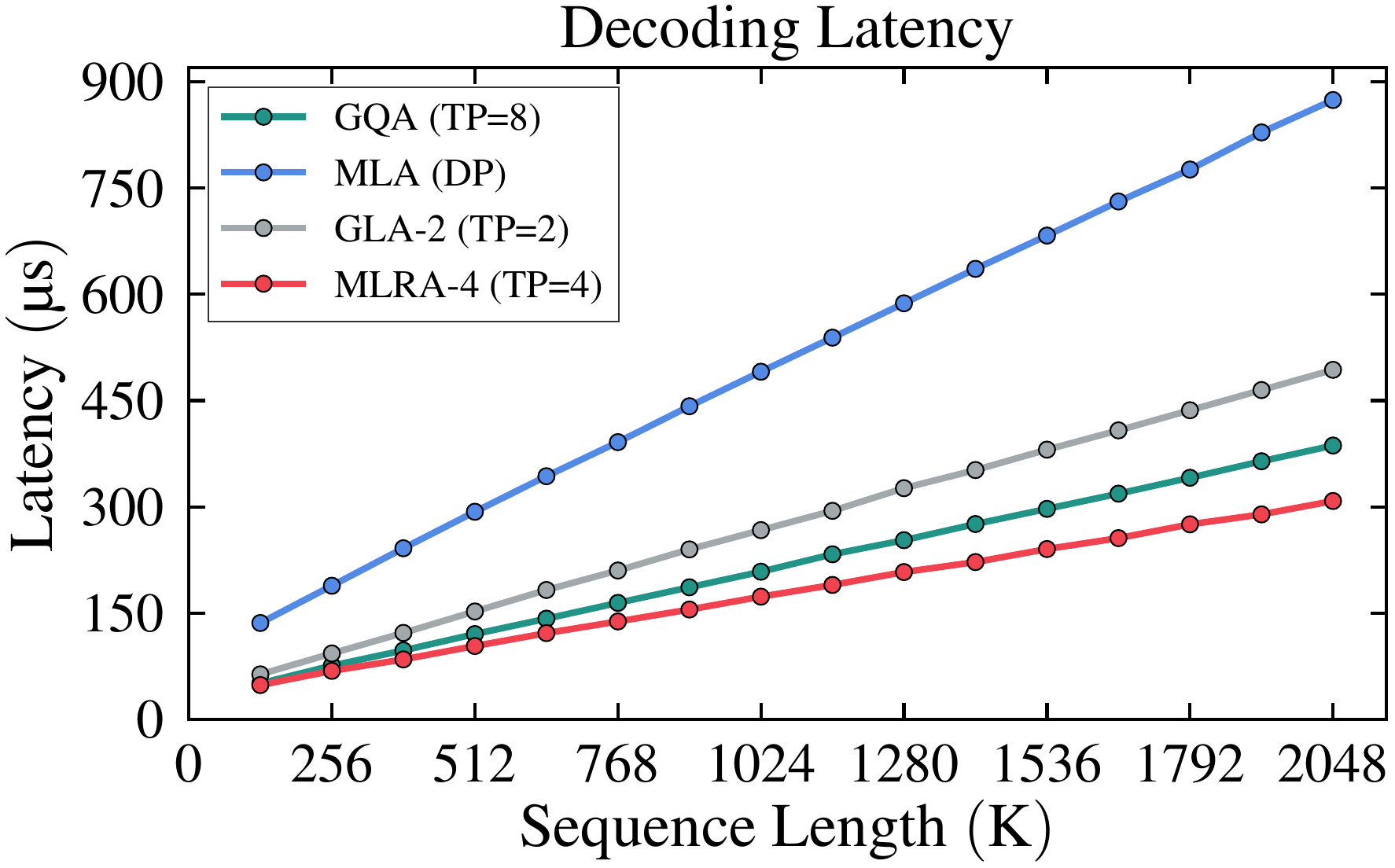}
        \caption{Decoding latency (lower is better) versus sequence length (batch=1) for GQA, MLA, GLA-2, and MLRA-4.}
        \label{fig:decoding_speed}
    \end{minipage}%
    \hfill
    \begin{minipage}{0.479\textwidth}
        \centering
        \includegraphics[width=\linewidth]{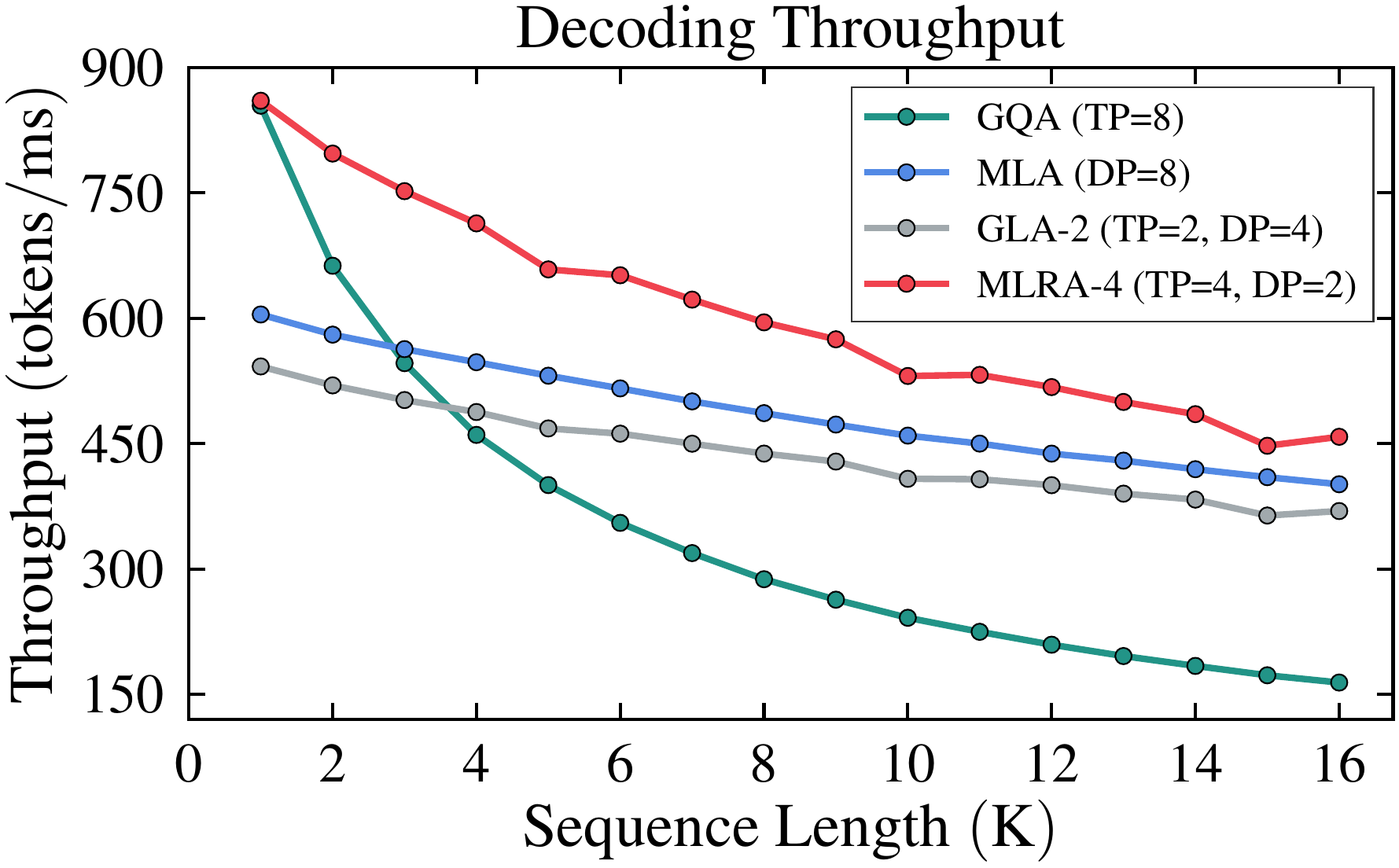}
        \caption{Decoding throughput versus sequence length (batch=128) for GQA, MLA, GLA-2, and MLRA-4.}
        \label{fig:decoding_throughput}
    \end{minipage}
\end{figure}
\subsection{Decoding Efficiency}
\paragraph{Decoding Speed.}
We benchmark long-context attention decoding speed for GQA, MLA, GLA-2, and MLRA-4 on an NVIDIA H100 80GB GPU. For MLA, GLA-2, and MLRA-4, we follow the attention decoding formulation in Eq.~(\ref{eq:step-2 decoding}). All models use 64 heads with a head dimension of 128; for MLA, GLA-2, and MLRA-4, the partial RoPE dimension is 64. MLA is evaluated using DeepSeek’s official implementation FlashMLA~\citep{flashmla2025}. GQA and GLA-2 use FlashAttention-3 kernels~\citep{dao2022flashattention,dao2023flashattention2,shah2024flashattention}. We implement our MLRA-4 kernel based on FlashAttention-3. We evaluate decoding speed across sequence lengths from 131,072 to 2,097,152 tokens (128K--2M). As shown in Figure~\ref{fig:decoding_speed}, MLRA-4 consistently outperforms all baselines at every length, yielding 1.05$\times$--1.26$\times$ speedups over GQA. The gap grows with context length against GQA and GLA-2, while the speedup over MLA remains steady at about 2.8$\times$, indicating that MLRA-4 with TP=4 substantially reduces long-context decoding latency.

\paragraph{Decoding Throughput.}
We evaluate decoding throughput for GQA, MLA, GLA-2, and MLRA-4 on eight NVIDIA H100 80GB GPUs, fixing the number of attention heads to 128 and the hidden size to 7168, following DeepSeekV3~\citep{DeepSeekAI2024DeepSeekV3TR}. We set $g=16$ for GQA. For MLA decoding deployment, there is a trade-off between data parallelism (DP) and tensor parallelism. With DP, we assign different requests to different devices, so attention parameters are replicated across devices and the load can become imbalanced due to varying sequence lengths. With TP, the up-projection parameters are sharded by head, but the KV cache loading is repeated across devices. Following SGLang~\citep{zheng2024sglang}, we aim to eliminate redundant KV cache loading. Therefore, we use DP=8 for MLA, TP=2/DP=4 for GLA-2, TP=4/DP=2 for MLRA-4, and TP=8 for GQA. Throughput is reported for sequence lengths ranging from 1,024 to 16,384 tokens, and our end-to-end measurements include both the pre-attention stage that prepares inputs for the attention kernel and the attention computation itself. We accelerate pre-attention computation with torch.compile~\citep{paszke2019pytorch} for MLA, GLA-2, and MLRA-4, and with custom Triton kernels for GQA. As shown in Figure~\ref{fig:decoding_throughput}, MLRA-4 achieves the highest decoding throughput across both short and long sequence lengths. This suggests that MLRA-4 with TP=4/DP=2 reduces parameter redundancy relative to MLA’s DP=8, while introducing only modest partial RoPE duplication, thereby yielding higher throughput than MLA. For short sequences, GQA outperforms MLA and GLA-2 because pre-attention dominates latency. However, MLRA-4 remains competitive with GQA due to having even fewer query, key, and value parameters, as shown in Appendix~\ref{appendix:main_parameters}.
\section{Conclusion}
We propose Multi-Head Low-Rank Attention (MLRA), a novel attention mechanism with native 4-way tensor parallelism support. At the 2.9B scale, MLRA-4 achieves state-of-the-art performance on perplexity and zero-shot common-sense reasoning benchmarks. Furthermore, MLRA achieves the lowest decoding latency for long-context sequences (up to 2M tokens) and the highest throughput across sequence lengths from 1K to 16K tokens with 4-way tensor parallelism.

\newpage
\section*{Acknowledgement}
We thank Songlin Yang for helpful discussion. We thank all the anonymous reviewers for their helpful comments and suggestions.
\bibliography{iclr2026_conference}
\bibliographystyle{iclr2026_conference}
\newpage
\appendix
\onecolumn
\renewcommand{\appendixpagename}{\centering \LARGE Appendix}
\appendixpage

\startcontents[section]
\printcontents[section]{l}{1}{\setcounter{tocdepth}{2}}
\vspace{20ex}

\newpage

\section{Notation}
\label{appendix:notation}
\begin{table*}[htbp]
\centering
\small
\renewcommand{\arraystretch}{1.15}
\caption{Notation used throughout this paper.}
\label{tab:notation}
\begin{tabularx}{\textwidth}{@{} c c X @{}}
\toprule
\textbf{Symbol} & \textbf{Shape / Type} & \textbf{Meaning} \\
\midrule
$n$ & scalar & Sequence length (number of tokens). \\
$d$ & scalar & Model/hidden dimension. \\
$h$ & scalar & Number of attention heads. \\
$d_h$ & scalar & Per-head dimension. \\
$d_h^R$ & scalar & Partial RoPE dimension. \\
$d_r$ & scalar & RoPE rotation dimension in Theorem~\ref{theorem:rope} (even); typically $d_r=d_h$ or $d_r=d_h^R$. \\
$d_f$ & scalar & MLP intermediate (FFN) dimension. \\
$g$ & scalar & Number of groups (or KV heads in MQA/GQA). \\
$r$ & scalar & Repeat factor $r=h/g$ when $g$ KV heads are broadcast to $h$ query heads. \\
$\alpha_q,\alpha_{kv},\alpha_{attn}$ & scalar & Variance-calibrating rescaling factors for query/KV latents and attention outputs. \\
$\beta_q,\beta_{kv}$ & scalar & Number of low-rank components in TPA for queries / keys-values. \\
$d_c$ & scalar & Latent KV dimension in MLA/GLA. \\
$d_c^\prime$ & scalar & Latent Q dimension in MLA/GLA/MFA. \\
\midrule
$s$ & integer & Translation offset. \\
$t_q,t_k$ & integer & Query/key token positions. \\
$b$ & integer & Block index. \\
$i$ & integer & Head index ($i\in\{0, \, \ldots, \, h-1\}$). \\
$j$ & integer & Group index ($j\in\{0, \, \ldots, \, g-1\}$). \\
$\gamma(i)$ & integer & Mapping from head index to group index.\\
$\bar{i}$ & integer & Head index within group in GLA-2, $\bar{i}= i-\gamma(i)h/2$. \\
$\tau$ & scalar & Softmax scaling factor, $\tau=1/\sqrt{d_h+d_h^R}$. \\
$\varphi$ & integer & Number of tensor-parallel devices. \\
\midrule
$\operatorname{RoPE}\left(\cdot\right)$ & function & Rotary Position Embedding applied to vectors (implemented via rotation matrices). \\
$\operatorname{RMSNorm}\left(\cdot\right)$ & function & Root-mean-square normalization. \\
$\operatorname{Reshape}\left(\cdot\right)$ & operator & Tensor reshaping (no data change). \\
$\operatorname{RepeatInterleave}\left(\cdot\right)$ & operator & Replication along the head dimension (e.g., broadcasting $g$ KV heads to $h$ heads). \\
$\operatorname{Concat}\left(\cdot\right)$ & operator & Concatenation along the last dimension unless stated otherwise. \\
\bottomrule
\end{tabularx}
\end{table*}

\section{Theorem}
\label{appendix:theorem1}

\subsection{Translation Equivariance}
Equivariance is a fundamental property in geometric systems such as molecules, where vector features such as atomic forces or dipole moments must transform consistently with the coordinate system~\citep{weiler20183d,fuchs2020se,satorras2021n}. In the context of sequence models, a common transformation is sequence translation. Let
$\bm{X} = \left(\bm{x}^{(0)}, \bm{x}^{(1)}, \ldots, \bm{x}^{(n-1)}\right) \in \mathbb{X}$
be a sequence of tokens, and define a translation operator $T_s: \mathbb{X} \rightarrow \mathbb{X}$ that translates the entire sequence by $s$ positions. Let $\phi: \mathbb{X} \rightarrow \mathbb{Y}$ be a function that maps a sequence to a matrix of attention scores $\phi\left(\bm{X}\right) \in \mathbb{R}^{n \times n}$, where each element
$\phi\left(\bm{X}\right)_{t_q,t_k} = A\left(\bm{x}^{(t_q)}, \bm{x}^{(t_k)}\right)$
denotes the attention score between tokens $\bm{x}^{(t_q)}$ and $\bm{x}^{(t_k)}$. We say that $\phi$ is \textit{translation equivariant} if there exists a corresponding output-space transformation $S_s: \mathbb{Y} \rightarrow \mathbb{Y}$ such that:
\begin{equation}
\label{eq:equivariance}
\phi\left(T_s\left(\bm{X}\right)\right) = S_s\left(\phi\left(\bm{X}\right)\right), \quad \forall s.
\end{equation}
This property ensures that the attention score between two tokens depends only on their relative positions, not their absolute positions. That is crucial for batch inference using left padding, where sequences of different lengths are offset to align ends. The first non-padding token of a sequence is no longer at position $0$, yet attention scores remain equivariant to this translation.

\subsection{Rotary Position Embedding}
Rotary Position Embedding (RoPE)~\citep{su2024roformer} is a positional encoding method designed to incorporate relative position information directly into the attention mechanism. We show in this section that RoPE is translation equivariant.

\begin{theorem}
\label{theorem:rope}
Given two tokens with query $\bm{q}$ and key $\bm{k}$ at positions $t_q$ and $t_k$, respectively, let $\operatorname{RoPE}\left(\bm{q}, t_q\right)$ and $\operatorname{RoPE}\left(\bm{k}, t_k\right)$ denote the RoPE-encoded vectors. We show that translating both positions by an offset $s$ leaves the inner product unchanged:
\begin{equation*}
\left\langle\operatorname{RoPE}\left(\bm{q}, t_q+s\right), \operatorname{RoPE}\left(\bm{k}, t_k+s\right)\right\rangle=
\left\langle\operatorname{RoPE}\left(\bm{q}, t_q\right), \operatorname{RoPE}\left(\bm{k}, t_k\right)\right\rangle.
\end{equation*}
Equivalently, for the attention-score matrix $\phi(\bm{X})\in\mathbb{R}^{n\times n}$ induced by RoPE-based dot products, $\phi$ is translation equivariant in the sense of Eq.~(\ref{eq:equivariance}) with $S_s$ being the simultaneous row/column shift operator.
\end{theorem}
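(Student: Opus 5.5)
The plan is to write RoPE explicitly as a block-diagonal rotation and reduce the claim to the group law of planar rotations. For even dimension $d_r$ and frequencies $\theta_0,\dots,\theta_{d_r/2-1}$, RoPE acts as $\operatorname{RoPE}(\bm{x},t)=\bm{R}_t\bm{x}$, where $\bm{R}_t=\operatorname{diag}\left(\bm{\rho}(t\theta_0),\dots,\bm{\rho}(t\theta_{d_r/2-1})\right)$. Each $\bm{\rho}(\alpha)\in\mathbb{R}^{2\times 2}$ is the planar rotation by angle $\alpha$, acting on coordinate pair $(2m,2m+1)$. If an implementation uses the ``rotate-half'' pairing instead of adjacent pairs, $\bm{R}_t$ is conjugated by a fixed permutation matrix $\bm{P}$. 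Since $\bm{P}^\top\bm{P}=\bm{I}$, the argument below is unaffected, and I will note this in one line.

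The key steps are as follows.

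\textbf{Step 1 (group law).} The planar rotations satisfy $\bm{\rho}(\alpha)^\top=\bm{\rho}(-\alpha)$ and $\bm{\rho}(\alpha)\bm{\rho}(\beta)=\bm{\rho}(\alpha+\beta)$. Applying these blockwise gives $\bm{R}_t^\top=\bm{R}_{-t}$ and $\bm{R}_a\bm{R}_b=\bm{R}_{a+b}$. Both are immediate from the angle-addition formulas.

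\textbf{Step 2 (relative form of the inner product).} From Step 1,
\begin{equation*}
\left\langle \bm{R}_{t_q}\bm{q},\bm{R}_{t_k}\bm{k}\right\rangle=\bm{q}^\top\bm{R}_{t_q}^\top\bm{R}_{t_k}\bm{k}=\bm{q}^\top\bm{R}_{t_k-t_q}\bm{k}.
\end{equation*}
The right-hand side depends on the positions only through $t_k-t_q$. Replacing $(t_q,t_k)$ by $(t_q+s,t_k+s)$ leaves this difference unchanged, which proves the displayed identity.

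\textbf{Step 3 (equivariance of $\phi$).} Define $T_s$ so that token $\bm{x}^{(t)}$ is placed at position $t+s$. Let $S_s$ be the simultaneous row/column shift $(S_s\bm{Y})_{t_q+s,\,t_k+s}=\bm{Y}_{t_q,t_k}$. The score between the translated tokens is computed from the same projected vectors $\bm{q}=\bm{x}^{(t_q)}\bm{W}$ and $\bm{k}=\bm{x}^{(t_k)}\bm{W}^\prime$, since the projections are position-independent, and the only change is in the rotation angles. By Step 2 the entry of $\phi(T_s(\bm{X}))$ at $(t_q+s,t_k+s)$ therefore equals $\phi(\bm{X})_{t_q,t_k}$, which is Eq.~(\ref{eq:equivariance}).

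The main obstacle is not algebraic but definitional: making $T_s$ and $S_s$ precise on a finite index set. I would handle this by indexing positions in $\mathbb{Z}$, which matches the left-padding motivation, where padded positions carry no tokens, and by defining $S_s$ as a shift on that index set, so no boundary truncation occurs. I would also remark that the NoPE components in MLA/MLRA contribute position-independent terms, so adding them to the logits preserves equivariance.
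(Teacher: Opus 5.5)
Your proof is correct and follows essentially the same route as the paper. Both write RoPE as a block-diagonal rotation $\bm{R}_t$ and observe that $\bm{R}_{t_q}^\top\bm{R}_{t_k}$ (or its row-vector analogue) depends only on the offset $t_k-t_q$: the paper expresses this through the complex factor $e^{\mathrm{i}(t_q-t_k)\theta_\ell}$, while you use the real rotation group law $\bm{R}_a\bm{R}_b=\bm{R}_{a+b}$, $\bm{R}_t^\top=\bm{R}_{-t}$, and your explicit definitions of $T_s$ and $S_s$ on $\mathbb{Z}$-indexed positions (plus the rotate-half permutation remark) make the equivariance claim more precise than the paper's one-line conclusion without adding a new idea.
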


\begin{proof}
Let the RoPE dimension be $d_r$ (assumed even). RoPE applies a block-diagonal rotation matrix $\bm{R}_t\in\mathbb{R}^{d_r\times d_r}$ at position $t$. Writing RoPE as right-multiplication on row vectors, we compute the inner product under RoPE as:
\begin{equation*}
\left(\bm{q}\bm{R}_{t_q} \right)\left( \bm{k}\bm{R}_{t_k}\right)^{\top}
=\bm{q} \bm{R}_{t_q} \bm{R}_{t_k}^\top \bm{k}^\top
= \operatorname{Re}\left[\sum_{\ell=0}^{d_r/2-1}
\bm{q}_{\left[2\ell:2\ell+2\right]}\, \bm{k}_{\left[2\ell:2\ell+2\right]}^* \, e^{\mathrm{i}\left(t_q-t_k\right)\theta_\ell}\right],
\end{equation*}
where $\theta_\ell$ is the angular frequency for the $\ell$-th 2D block and $(\cdot)^*$ denotes complex conjugation under the standard $\mathbb{R}^2\simeq\mathbb{C}$ identification.

Now consider translating both tokens by an offset $s$. The relative displacement is unchanged:
\begin{equation*}
\left(t_q + s\right) - \left(t_k + s\right) = t_q - t_k,
\end{equation*}
so the factor $e^{\mathrm{i}\left(t_q-t_k\right)\theta_\ell}$ remains unchanged for every $\ell$. Therefore,
\begin{equation*}
\left\langle\operatorname{RoPE}\left(\bm{q}, t_q+s\right), \operatorname{RoPE}\left(\bm{k}, t_k+s\right)\right\rangle=
\left\langle\operatorname{RoPE}\left(\bm{q}, t_q\right), \operatorname{RoPE}\left(\bm{k}, t_k\right)\right\rangle.
\end{equation*}
This proves RoPE-induced dot-product scores satisfy translation equivariance in Eq.~(\ref{eq:equivariance}).
\end{proof}

\begin{remark}
\label{remark:rope_equivariance}
While RoPE preserves dot-product translation equivariance, applying an \emph{arbitrary} linear map after RoPE generally breaks this property. Specifically, consider:
\begin{equation*}
\left\langle \operatorname{RoPE}\left(\bm{q}, t_q\right)\bm{W}^{\text{Q}} ,\  \operatorname{RoPE}\left(\bm{k}, t_k\right)\bm{W}^{\text{K}} \right\rangle 
= \bm{q}\bm{R}_{t_q} \bm{W}^{\text{Q}} \left(\bm{W}^{\text{K}}\right)^{\top}\bm{R}_{t_k}^{\top} \bm{k}^{\top}.
\end{equation*}
The term $\bm{W}^{\text{Q}} \left(\bm{W}^{\text{K}}\right)^{\top}$ breaks translation equivariance by disrupting the expression's dependence on the relative position $t_q-t_k$. The property would only be preserved in the specific case where $\bm{W}^{\text{Q}} \left(\bm{W}^{\text{K}}\right)^{\top}= \bm{I}$, which would reduce the expression to its original form. However, since this constraint is difficult to enforce during training, translation equivariance is generally lost when applying a linear projection after RoPE.
\end{remark}

\section{Attention Mechanism}
\label{appendix:attention_mechanism}

\subsection{Multi-Head Attention (MHA)}
Consider a sequence of $n$ tokens with hidden states $\bm{H} \in \mathbb{R}^{n \times d}$. We first project these hidden states into queries, keys, and values using projection matrices
$\bm{W}^{\text{Q}}, \bm{W}^{\text{K}}, \bm{W}^{\text{V}} \in \mathbb{R}^{d \times \left(hd_h\right)}$:
\begin{equation*}
\overline{\bm{Q}}= \operatorname{RoPE} \left(\bm{H} \bm{W}^{\text{Q}}\right),\quad 
\overline{\bm{K}} = \operatorname{RoPE} \left(\bm{H} \bm{W}^{\text{K}}\right),\quad 
\overline{\bm{V}} = \bm{H} \bm{W}^{\text{V}},
\end{equation*}
where $\overline{\bm{Q}}, \overline{\bm{K}}, \overline{\bm{V}} \in \mathbb{R}^{n \times \left(hd_h\right)}$, $h$ is the number of attention heads, and $d_h$ is the dimensionality of each head.
Next, we reshape these matrices to separate the head dimension:
\begin{equation*}
\small
\tens{Q} = \operatorname{Reshape}\left(\overline{\bm{Q}}, \, \left[n, \, h, \, d_h\right]\right),\quad
\tens{K}^{\text{C}} = \operatorname{Reshape}\left(\overline{\bm{K}}, \, \left[n, \, h, \, d_h\right]\right),\quad 
\tens{V}^{\text{C}} = \operatorname{Reshape}\left(\overline{\bm{V}}, \, \left[n, \, h, \, d_h\right]\right),
\end{equation*}
such that $\tens{Q}, \tens{K}^{\text{C}}, \tens{V}^{\text{C}} \in \mathbb{R}^{n \times h \times d_h}$. We cache $\tens{K}^{\text{C}}$ and $\tens{V}^{\text{C}}$ to accelerate inference.

\begin{remark}
Let $\tens{Q}, \tens{K}^{\text{C}} \in \mathbb{R}^{n \times h \times d_h}$ denote the RoPE-encoded queries and keys after projection and reshaping. For head $i$, define:
\begin{equation*}
\tens{Q}_{t_q,i,:} := \tens{Q}\left[t_q, \, i, \, :\right], \quad 
\tens{K}_{t_k,i,:} := \tens{K}^{\text{C}}\left[t_k, \, i, \, :\right].
\end{equation*}
Then, for any translation offset $s$, it follows from Theorem~\ref{theorem:rope} that:
\begin{equation*}
\left\langle \tens{Q}_{t_q+s,i,:}, \, \tens{K}_{t_k+s,i,:} \right\rangle 
= \left\langle \tens{Q}_{t_q,i,:}, \, \tens{K}_{t_k,i,:} \right\rangle.
\end{equation*}
\end{remark}

\subsection{Multi-Query Attention (MQA) and Grouped-Query Attention (GQA)}
Both MQA and GQA reduce the number of key and value heads compared to MHA, while maintaining the full number of query heads. MQA takes this to the extreme by using a single key-value head for all query heads, whereas GQA partitions the query heads into groups that each share a key-value head. Given a sequence of $n$ tokens with hidden states $\bm{H} \in \mathbb{R}^{n \times d}$, the queries are computed using the same projection as in MHA. To reduce KV cache, we use projection matrices $\bm{W}^{\text{K}}, \bm{W}^{\text{V}} \in \mathbb{R}^{d \times d_h g}$, where $g < h$ (e.g., $g = 1$ for MQA), to compute:
\begin{equation*}
\overline{\bm{K}}^{\text{C}} = \operatorname{RoPE}\left(\bm{H} \bm{W}^{\text{K}}\right), \quad 
\overline{\bm{V}}^{\text{C}} = \bm{H} \bm{W}^{\text{V}}.
\end{equation*}
These are then reshaped into per-head form:
\begin{equation*}
\tens{K}^{\text{C}} = \operatorname{Reshape}\left(\overline{\bm{K}}^{\text{C}}, \, \left[n, \, g, \, d_h\right]\right), \quad 
\tens{V}^{\text{C}} = \operatorname{Reshape}\left(\overline{\bm{V}}^{\text{C}}, \, \left[n, \, g, \, d_h\right]\right).
\end{equation*}
We cache $\tens{K}^{\text{C}}$ and $\tens{V}^{\text{C}}$ during inference. We repeat them by a factor of $r=h/g$ along the head axis to match the $h$ query heads:
\begin{equation*}
\begin{split}
\tens{K}&=\operatorname{RepeatInterleave}\left(\tens{K}^{\text{C}}, \, r, \, \text{dim}=1\right) \in \mathbb{R}^{n \times h \times d_h},\\
\tens{V}&=\operatorname{RepeatInterleave}\left(\tens{V}^{\text{C}}, \, r, \, \text{dim}=1\right) \in \mathbb{R}^{n \times h \times d_h}.
\end{split}
\end{equation*}

\begin{remark}
Let $\tens{Q} \in \mathbb{R}^{n \times h \times d_h}$ and $\tens{K}^{\text{C}} \in \mathbb{R}^{n \times g \times d_h}$ denote the RoPE-encoded queries and cached keys, respectively. For head $i$, define:
\begin{equation*}
\tens{Q}_{t_q,i,:} := \tens{Q}\left[t_q, \, i, \, :\right], \quad
\tens{K}_{t_k,i,:} := \tens{K}^{\text{C}}\left[t_k, \, \left\lfloor\frac{i}{r}\right\rfloor, \, :\right].
\end{equation*}
Since both vectors are RoPE-encoded, for any offset $s$ (with valid indices),
\begin{equation*}
\left\langle \tens{Q}_{t_q+s, i, :},\ \tens{K}_{t_k+s, i, :} \right\rangle 
= \left\langle \tens{Q}_{t_q, i, :},\ \tens{K}_{t_k, i, :} \right\rangle.
\end{equation*}
\end{remark}

\subsection{Multi-Head Latent Attention (MLA)}
\label{appendix:mla}
Given a sequence of $n$ tokens with hidden states $\bm{H} \in \mathbb{R}^{n \times d}$, MLA first computes queries as:
\begin{equation*}
\begin{split}
\bm{C}^{\text{Q}} = \alpha_{q}\operatorname{RMSNorm}\left(\bm{H} \bm{W}^{\text{DQ}}\right), \quad 
\overline{\bm{Q}}^{\text{NoPE}} &= \bm{C}^{\text{Q}} \bm{W}^{\text{UQ}}, \quad 
\overline{\bm{Q}}^{\text{RoPE}} = \operatorname{RoPE} \left( \bm{C}^{\text{Q}} \bm{W}^{\text{QR}} \right), \\
\bm{W}^{\text{DQ}} \in \mathbb{R}^{d \times d_c^\prime}, \quad 
\bm{W}^{\text{UQ}} &\in \mathbb{R}^{d_c^\prime \times \left(hd_h\right)}, \quad 
\bm{W}^{\text{QR}} \in \mathbb{R}^{d_c^\prime \times (h d_h^R)}.
\end{split}
\end{equation*}
where $\alpha_{q}=\sqrt{\frac{d}{d_c^{\prime}}}$ is the rescaling factor for query states $\bm{C}^{\text{Q}}$. We then reshape queries to separate heads:
\begin{equation*}
\tens{Q}^{\text{NoPE}} = \operatorname{Reshape}\left(\overline{\bm{Q}}^{\text{NoPE}}, \, \left[n, \, h, \, d_h\right]\right), \quad
\tens{Q}^{\text{RoPE}} = \operatorname{Reshape}\left(\overline{\bm{Q}}^{\text{RoPE}}, \, \left[n, \, h, \, d_h^R\right]\right),
\end{equation*}
where $\tens{Q}^{\text{NoPE}} \in \mathbb{R}^{n \times h \times d_h}$ and $\tens{Q}^{\text{RoPE}} \in \mathbb{R}^{n \times h \times d_h^R}$. These are concatenated along the last dimension to form the final query:
\begin{equation*}
\tens{Q} = \operatorname{Concat}\left(\left[\tens{Q}^{\text{NoPE}}, \, \tens{Q}^{\text{RoPE}}\right], \, \text{dim=2}\right).
\end{equation*}

To reduce the KV cache, MLA obtains shared compressed KV states via a down-projection:
\begin{equation*}
\begin{split}
\bm{C}^{\text{KV}} = \alpha_{kv}\operatorname{RMSNorm}\left(\bm{H} \bm{W}^{\text{DKV}}\right), \quad 
&\bm{W}^{\text{DKV}} \in \mathbb{R}^{d \times d_c},\\
\bm{K}^{\text{RoPE}} = \operatorname{RoPE}\left(\bm{H} \bm{W}^{\text{KR}}\right), \quad 
&\bm{W}^{\text{KR}} \in \mathbb{R}^{d \times d_h^R},
\end{split}
\end{equation*}
where $\alpha_{kv}=\sqrt{\frac{d}{d_c}}$. Both $\bm{C}^{\text{KV}}$ and $\bm{K}^{\text{RoPE}}$ are cached during inference. MLA computes $h$ keys and values using learnable up-projection matrices:
\begin{equation*}
\overline{\bm{K}}^{\text{NoPE}} = \bm{C}^{\text{KV}} \bm{W}^{\text{UK}}, \quad 
\overline{\bm{V}} = \bm{C}^{\text{KV}} \bm{W}^{\text{UV}}, \quad 
\bm{W}^{\text{UK}}, \bm{W}^{\text{UV}} \in \mathbb{R}^{d_c \times \left(hd_h\right)}.
\end{equation*}
These are reshaped into per-head form:
\begin{equation*}
\begin{split}
\tens{K}^{\text{NoPE}} = \operatorname{Reshape}\left(\overline{\bm{K}}^{\text{NoPE}}, \, \left[n, \, h, \, d_h\right]\right)&, \quad 
\tens{K}^{\text{RoPE}} = \operatorname{Reshape}\left(\bm{K}^{\text{RoPE}}, \, \left[n, \, 1, \, d_h^R\right]\right),\\
\tens{V} = \operatorname{Reshape}&\left(\overline{\bm{V}}, \, \left[n, \, h, \, d_h\right]\right),
\end{split}
\end{equation*}
where $\tens{K}^{\text{NoPE}} \in \mathbb{R}^{n\times h \times d_h}$ and $\tens{V}\in\mathbb{R}^{n\times h\times d_h}$.

To obtain per-head position-aware keys, MLA repeats the partial RoPE key across heads:
\begin{equation*}
\tens{K} = \operatorname{Concat}\left(\left[\tens{K}^{\text{NoPE}}, \, \operatorname{RepeatInterleave}\left(\tens{K}^{\text{RoPE}}, \, h, \, \text{dim=1}\right)\right], \, \text{dim=2}\right).
\end{equation*}

\begin{remark}
We analyze the translation equivariance property of MLA. Let $\tens{Q}_{t_q, i, :}=\operatorname{Concat}\left(\tens{Q}^{\text{NoPE}}\left[t_q, \, i, \, :\right], \, \tens{Q}^{\text{RoPE}}\left[t_q, \, i, \, :\right]\right)$ and $\tens{K}_{t_k, i, :}=\operatorname{Concat}\left(\tens{K}^{\text{NoPE}}\left[t_k, \, i, \, :\right], \, \bm{K}^{\text{RoPE}}\left[t_k, \, :\right]\right)$ denote the query and key vectors for head $i$ at positions $t_q$ and $t_k$, respectively. The attention score for this head is given by the inner product:
\begin{equation*}
\left\langle\tens{Q}_{t_q, i, :}, \, \tens{K}_{t_k, i, :}\right\rangle
= \left\langle \tens{Q}^{\text{NoPE}}\left[t_q, \, i, \, :\right], \, \tens{K}^{\text{NoPE}}\left[t_k, \, i, \, :\right] \right\rangle
+ \left\langle \tens{Q}^{\text{RoPE}}\left[t_q, \, i, \, :\right], \, \tens{K}^{\text{RoPE}}\left[t_k, \, :\right] \right\rangle.
\end{equation*}
Between the two terms, the second term with RoPE is position-dependent yet translation equivariant, due to Theorem~\ref{theorem:rope}. The first term is position-independent and thus unchanged under joint translations of $t_q$ and $t_k$. Therefore, the attention score $\left\langle \tens{Q}_{t_q, i, :}, \, \tens{K}_{t_k, i, :} \right\rangle$ is equivariant to translation in input positions. Although MLA introduces positional inductive bias via partial RoPE and is translation equivariant, we refer to this property as semi-translation equivariance to distinguish it from full RoPE translation equivariance.
\end{remark}

\subsection{Multi-matrix Factorization Attention (MFA)}
\label{appendix:mfa}
Given a sequence of $n$ tokens with hidden states $\bm{H}\in\mathbb{R}^{n\times d}$, MFA uses $h$ query heads but only a single shared key-value head (i.e., $g=1$). MFA first projects $\bm{H}$ to a low-rank space and applies RMSNorm:
\begin{equation*}
\bm{C}^{\text{Q}}=\operatorname{RMSNorm}\left(\bm{H}\bm{W}^{\text{CQ}}\right),
\quad
\bm{W}^{\text{CQ}}\in\mathbb{R}^{d\times d_c^\prime}.
\end{equation*}
It then up-projects to all query heads and applies RoPE:
\begin{equation*}
\overline{\bm{Q}}=\operatorname{RoPE}\left(\bm{C}^{\text{Q}}\bm{W}^{\text{UQ}}\right),
\quad
\bm{W}^{\text{UQ}}\in\mathbb{R}^{d_c^\prime\times (h\cdot 2d_h)}.
\end{equation*}
Finally, we reshape into per-head form:
\begin{equation*}
\tens{Q}=\operatorname{Reshape}\left(\overline{\bm{Q}}, \, \left[n, \, h, \, 2d_h\right]\right)\in\mathbb{R}^{n\times h\times 2d_h}.
\end{equation*}

To reduce KV cache, MFA computes only one key head and one value head:
\begin{equation*}
\overline{\bm{K}}^{\text{C}}=\operatorname{RoPE}\left(\bm{H}\bm{W}^{\text{K}}\right),
\quad
\overline{\bm{V}}^{\text{C}}=\bm{H}\bm{W}^{\text{V}},
\quad
\bm{W}^{\text{K}},\bm{W}^{\text{V}}\in\mathbb{R}^{d\times 2d_h},
\end{equation*}
where $\overline{\bm{K}}^{\text{C}},\overline{\bm{V}}^{\text{C}}\in\mathbb{R}^{n\times 2d_h}$. We reshape them as
\begin{equation*}
\tens{K}^{\text{C}}=\operatorname{Reshape}\left(\overline{\bm{K}}^{\text{C}}, \, \left[n, \, 1, \, 2d_h\right]\right),
\quad
\tens{V}^{\text{C}}=\operatorname{Reshape}\left(\overline{\bm{V}}^{\text{C}}, \, \left[n, \, 1, \, 2d_h\right]\right),
\end{equation*}
and cache $\tens{K}^{\text{C}}$ and $\tens{V}^{\text{C}}$ during inference. We repeat them along the head axis to match the $h$ query heads:
\begin{equation*}
\begin{split}
\tens{K}&=\operatorname{RepeatInterleave}\left(\tens{K}^{\text{C}}, \, h, \, \text{dim}=1\right)\in\mathbb{R}^{n\times h\times 2d_h},\\
\tens{V}&=\operatorname{RepeatInterleave}\left(\tens{V}^{\text{C}}, \, h, \, \text{dim}=1\right)\in\mathbb{R}^{n\times h\times 2d_h}.
\end{split}
\end{equation*}
The analysis of translation equivariance is similar to that of MQA.

\subsection{Tensor Product Attention (TPA)}
TPA achieves KV cache compression through low-rank factorization. It represents each head's key/value at a token as a low-rank mixture of $\beta_{kv}$ components: component vectors in $\mathbb{R}^{d_h}$ and head-specific scalar coefficients. During inference, TPA caches the component tensors and coefficient tensors, and computes keys/values on the fly via linear combination.

Given a sequence of $n$ tokens with hidden states $\bm{H} \in \mathbb{R}^{n \times d}$, TPA first computes the query/key/value factors:
\begin{equation*}
\begin{split}
\overline{\bm{Q}}^{\text{A}} &= \bm{H}\bm{W}^{\text{AQ}}, \quad
\bm{W}^{\text{AQ}} \in \mathbb{R}^{d \times (\beta_q h)}, \quad
\overline{\bm{Q}}^{\text{A}} \in \mathbb{R}^{n \times (\beta_q h)},\\
\overline{\bm{Q}}^{\text{C}} &= \bm{H}\bm{W}^{\text{CQ}}, \quad
\bm{W}^{\text{CQ}} \in \mathbb{R}^{d \times (\beta_q d_h)}, \quad
\overline{\bm{Q}}^{\text{C}} \in \mathbb{R}^{n \times (\beta_q d_h)},\\
\overline{\bm{K}}^{\text{A}} &= \bm{H}\bm{W}^{\text{AK}}, \quad
\bm{W}^{\text{AK}} \in \mathbb{R}^{d \times (\beta_{kv} h)}, \quad
\overline{\bm{K}}^{\text{A}} \in \mathbb{R}^{n \times (\beta_{kv} h)},\\
\overline{\bm{K}}^{\text{C}} &= \bm{H}\bm{W}^{\text{CK}}, \quad
\bm{W}^{\text{CK}} \in \mathbb{R}^{d \times (\beta_{kv} d_h)}, \quad
\overline{\bm{K}}^{\text{C}} \in \mathbb{R}^{n \times (\beta_{kv} d_h)},\\
\overline{\bm{V}}^{\text{A}} &= \bm{H}\bm{W}^{\text{AV}}, \quad
\bm{W}^{\text{AV}} \in \mathbb{R}^{d \times (\beta_{kv} h)}, \quad
\overline{\bm{V}}^{\text{A}} \in \mathbb{R}^{n \times (\beta_{kv} h)},\\
\overline{\bm{V}}^{\text{C}} &= \bm{H}\bm{W}^{\text{CV}}, \quad
\bm{W}^{\text{CV}} \in \mathbb{R}^{d \times (\beta_{kv} d_h)}, \quad
\overline{\bm{V}}^{\text{C}} \in \mathbb{R}^{n \times (\beta_{kv} d_h)}.
\end{split}
\end{equation*}

We reshape the projections into 3D tensors:
\begin{equation*}
\begin{split}
\tens{Q}^{\text{A}} &= \operatorname{Reshape}\left(\overline{\bm{Q}}^{\text{A}}, \, \left[n, \, \beta_q, \, h\right]\right), \\
\tens{Q}^{\text{C}} &= \operatorname{RoPE}\left(\operatorname{Reshape}\left(\overline{\bm{Q}}^{\text{C}}, \, \left[n, \, \beta_q, \, d_h\right]\right)\right),\\
\tens{K}^{\text{A}} &= \operatorname{Reshape}\left(\overline{\bm{K}}^{\text{A}}, \, \left[n, \, \beta_{kv}, \, h\right]\right), \\
\tens{K}^{\text{C}} &= \operatorname{RoPE}\left(\operatorname{Reshape}\left(\overline{\bm{K}}^{\text{C}}, \, \left[n, \, \beta_{kv}, \, d_h\right]\right)\right),\\
\tens{V}^{\text{A}} &= \operatorname{Reshape}\left(\overline{\bm{V}}^{\text{A}}, \, \left[n, \, \beta_{kv}, \, h\right]\right), \\
\tens{V}^{\text{C}} &= \operatorname{Reshape}\left(\overline{\bm{V}}^{\text{C}}, \, \left[n, \, \beta_{kv}, \, d_h\right]\right),
\end{split}
\end{equation*}
so that $\tens{Q}^{\text{A}} \in \mathbb{R}^{n \times \beta_q \times h}$, $\tens{Q}^{\text{C}} \in \mathbb{R}^{n \times \beta_q \times d_h}$, $\tens{K}^{\text{A}} \in \mathbb{R}^{n \times \beta_{kv} \times h}$, $\tens{K}^{\text{C}} \in \mathbb{R}^{n \times \beta_{kv} \times d_h}$, $\tens{V}^{\text{A}} \in \mathbb{R}^{n \times \beta_{kv} \times h}$, $\tens{V}^{\text{C}} \in \mathbb{R}^{n \times \beta_{kv} \times d_h}$.

For each token position $t\in\{0,\ldots,n-1\}$, the final query, key, and value matrices are computed as:
\begin{equation*}
\begin{split}
\tens{Q}\left[t, \, :, \, :\right] &= \frac{1}{\beta_q} \left(\tens{Q}^{\text{A}}\left[t, \, :, \, :\right]\right)^{\top} \tens{Q}^{\text{C}}\left[t, \, :, \, :\right] \in \mathbb{R}^{h \times d_h},\\
\tens{K}\left[t, \, :, \, :\right] &= \frac{1}{\beta_{kv}} \left(\tens{K}^{\text{A}}\left[t, \, :, \, :\right]\right)^{\top} \tens{K}^{\text{C}}\left[t, \, :, \, :\right] \in \mathbb{R}^{h \times d_h},\\
\tens{V}\left[t, \, :, \, :\right] &= \frac{1}{\beta_{kv}} \left(\tens{V}^{\text{A}}\left[t, \, :, \, :\right]\right)^{\top} \tens{V}^{\text{C}}\left[t, \, :, \, :\right] \in \mathbb{R}^{h \times d_h}.
\end{split}
\end{equation*}
During inference, TPA caches $\tens{K}^{\text{A}}, \tens{K}^{\text{C}}, \tens{V}^{\text{A}}, \tens{V}^{\text{C}}$.

\begin{remark} 
Fix a head index $i$ and token positions $t_q,t_k$. Let $\tens{Q}_{t_q, i, :} := \tens{Q}\left[t_q, \, i, \, :\right]\in\mathbb{R}^{d_h}$ and $\tens{K}_{t_k, i, :} := \tens{K}\left[t_k, \, i, \, :\right]\in\mathbb{R}^{d_h}$. From the computation above, we have
\begin{equation*}
\small
\tens{Q}_{t_q, i, :} = \frac{1}{\beta_q}\sum_{b_q=0}^{\beta_q-1} \tens{Q}^{\text{A}}\left[t_q, \, b_q, \, i\right]\ \tens{Q}^{\text{C}}\left[t_q, \, b_q, \, :\right],
\qquad
\tens{K}_{t_k, i, :} = \frac{1}{\beta_{kv}}\sum_{b_{kv}=0}^{\beta_{kv}-1} \tens{K}^{\text{A}}\left[t_k, \, b_{kv}, \, i\right]\ \tens{K}^{\text{C}}\left[t_k, \, b_{kv}, \, :\right].
\end{equation*}
Therefore, the inner product expands as
\begin{equation*}
\small
\begin{split}
\left\langle \tens{Q}_{t_q, i, :}, \, \tens{K}_{t_k, i, :} \right\rangle
&= \frac{1}{\beta_q \beta_{kv}}
\sum_{b_q=0}^{\beta_q-1}\sum_{b_{kv}=0}^{\beta_{kv}-1}
\tens{Q}^{\text{A}}\left[t_q, \, b_q, \, i\right] \tens{K}^{\text{A}}\left[t_k, \, b_{kv}, \, i\right]
\left\langle \tens{Q}^{\text{C}}\left[t_q, \, b_q, \, :\right], \, \tens{K}^{\text{C}}\left[t_k, \, b_{kv}, \, :\right] \right\rangle.
\end{split}
\end{equation*}

Since $\tens{Q}^{\text{C}}$ and $\tens{K}^{\text{C}}$ are RoPE-encoded, Theorem~\ref{theorem:rope} implies that for any offset $s$,
\begin{equation*}
\left\langle \tens{Q}^{\text{C}}\left[t_q+s, \, b_q, \, :\right],\ \tens{K}^{\text{C}}\left[t_k+s, \, b_{kv}, \, :\right] \right\rangle
=
\left\langle \tens{Q}^{\text{C}}\left[t_q, \, b_q, \, :\right],\ \tens{K}^{\text{C}}\left[t_k, \, b_{kv}, \, :\right] \right\rangle,
\quad
\forall\, b_q,\ b_{kv}.
\end{equation*}
Because the scalar coefficients $\tens{Q}^{\text{A}}\left[t_q, \, b_q, \, i\right]$ and $\tens{K}^{\text{A}}\left[t_k, \, b_{kv}, \, i\right]$ shift with the tokens under translation, the full double-sum is equivariant under jointly translating $t_q$ and $t_k$ by the same offset $s$:
\begin{equation*}
\left\langle \tens{Q}_{t_q+s, i, :}, \, \tens{K}_{t_k+s, i, :} \right\rangle
=
\left\langle \tens{Q}_{t_q, i, :}, \, \tens{K}_{t_k, i, :} \right\rangle.
\end{equation*}
Thus, TPA preserves translation equivariance of attention scores with RoPE.
\end{remark}
\subsection{Grouped Latent Attention (GLA)}
Given a sequence of $n$ tokens with hidden states $\bm{H} \in \mathbb{R}^{n \times d}$, GLA divides the $h$ attention heads into $g$ groups (e.g., $g=2$), where each group has $r = h/g$ heads. GLA adopts the same query computation mechanism as MLA.

Instead of a single compressed KV state, GLA computes $g$ independent compressed states:
\begin{equation*}
\bm{C}^{j, \text{KV}} = \alpha_{kv}\operatorname{RMSNorm}\left(\bm{H} \bm{W}^{j, \text{DKV}}\right), \quad 
\bm{W}^{j, \text{DKV}} \in \mathbb{R}^{d \times (d_c/g)},
\end{equation*}
where $j \in \{0, \ldots, g-1\}$, $d_c$ is the total latent dimension, each group uses $d_c/g$ dimension, and $\alpha_{kv}=\sqrt{\frac{gd}{d_c}}$.

The RoPE keys remain shared across all groups:
\begin{equation*}
\bm{K}^{\text{RoPE}} = \operatorname{RoPE}\left(\bm{H} \bm{W}^{\text{KR}}\right), \quad 
\bm{W}^{\text{KR}} \in \mathbb{R}^{d \times d_h^R}.
\end{equation*}

During inference, we cache $\left\{\bm{C}^{j, \text{KV}}, \ldots, \bm{C}^{g-1, \text{KV}}\right\}$ and $\bm{K}^{\text{RoPE}}$ with total KV cache size of $d_c + d_h^R$ per token.

Each group independently computes its keys and values:
\begin{equation*}
\overline{\bm{K}}^{j, \text{NoPE}} = \bm{C}^{j, \text{KV}} \bm{W}^{j, \text{UK}}, \quad \overline{\bm{V}}^{j} = \bm{C}^{j, \text{KV}} \bm{W}^{j, \text{UV}}, \quad \bm{W}^{j, \text{UK}}, \bm{W}^{j, \text{UV}} \in \mathbb{R}^{(d_c/g) \times (r d_h)},
\end{equation*}
where $r = h/g$ is the number of heads per group.

Reshape into per-head form for each group:
\begin{equation*}
\begin{split}
\tens{K}^{j, \text{NoPE}} = \operatorname{Reshape}\left(\overline{\bm{K}}^{j, \text{NoPE}}, \, \left[n, \, r, \, d_h\right]\right),& \quad 
\tens{V}^{j} = \operatorname{Reshape}\left(\overline{\bm{V}}^{j}, \, \left[n, \, r, \, d_h\right]\right), \\ \tens{K}^{\text{RoPE}} = \operatorname{Reshape}&\left(\bm{K}^{\text{RoPE}}, \, \left[n, \, 1, \, d_h^R\right]\right)
\end{split}
\end{equation*}

Construct position-aware keys for each group by repeating the shared RoPE keys:
\begin{equation*}
\tens{K}^{j} = \operatorname{Concat}\left(\left[\tens{K}^{j, \text{NoPE}}, \,  \operatorname{RepeatInterleave}\left(\tens{K}^{\text{RoPE}}, \, r, \, \text{dim=1}\right) \right], \, \text{dim=2}\right)
\in \mathbb{R}^{n \times r \times (d_h + d_h^R)}.
\end{equation*}

We finally concatenate all $\tens{K}^{j}$, $\tens{V}^{j}$ to obtain:
\begin{equation*}
    \tens{K} = \operatorname{Concat}\left(\left[\tens{K}^{0}, \, \ldots, \, \tens{K}^{g-1} \right], \, \text{dim=1}\right), \quad \tens{V} = \operatorname{Concat}\left(\left[\tens{V}^{0}, \, \ldots, \, \tens{V}^{g-1} \right], \, \text{dim=1}\right),
\end{equation*}
where $\tens{K}\in\mathbb{R}^{n\times h\times (d_h+d_h^R)}$ and $\tens{V}\in\mathbb{R}^{n\times h\times d_h}$. The analysis of translation equivariance is similar to that of MLA.

\subsection{Grouped-Tied Attention (GTA)}
\label{appendix:gta}
Given a sequence of $n$ tokens with hidden states $\bm{H}\in\mathbb{R}^{n\times d}$, GTA uses $h$ query heads and $g$ value heads, and computes queries as
\begin{equation*}
\overline{\bm{Q}}=\bm{H}\bm{W}^{\text{Q}},
\quad
\bm{W}^{\text{Q}}\in\mathbb{R}^{d\times (hd_h)},
\quad
\widetilde{\tens{Q}}=\operatorname{Reshape}\left(\overline{\bm{Q}}, \, \left[n, \, h, \, d_h\right]\right)\in\mathbb{R}^{n\times h\times d_h}.
\end{equation*}
We split $\widetilde{\bm{Q}}$ into NoPE and RoPE parts and apply RoPE to the latter:
\begin{equation*}
\begin{split}
\tens{Q}^{\text{NoPE}} &= \widetilde{\tens{Q}}\left[:, \, :, \, :d_h-d_h^R\right]\in\mathbb{R}^{n\times h\times \left(d_h-d_h^R\right)},
\\
\tens{Q}^{\text{RoPE}} &= \operatorname{RoPE}\left(\widetilde{\tens{Q}}\left[:, \, :, \, d_h-d_h^R:\right]\right)\in\mathbb{R}^{n\times h\times d_h^R}, \\
\tens{Q} &= \operatorname{Concat}\left(\left[\tens{Q}^{\text{NoPE}}, \, \tens{Q}^{\text{RoPE}}\right], \, \text{dim=2}\right)\in\mathbb{R}^{n\times h\times d_h}.
\end{split}
\end{equation*}

GTA computes a single RoPE key shared across all heads:
\begin{equation*}
\begin{split}
\overline{\bm{K}}^{\text{RoPE}}=\bm{H}\bm{W}^{\text{KR}}, \quad &
\bm{W}^{\text{KR}}\in\mathbb{R}^{d\times d_h^R}, \quad
\bm{K}^{\text{RoPE}}=\operatorname{RoPE}\left(\overline{\bm{K}}^{\text{RoPE}}\right), \\
\tens{K}^{\text{RoPE}} &= \operatorname{Reshape}\left(\bm{K}^{\text{RoPE}}, \, \left[n, \, 1, \, d_h^R\right]\right)
\end{split}
\end{equation*}

To reduce KV cache, GTA computes grouped value states with only $g$ heads:
\begin{equation*}
\overline{\bm{V}}=\bm{H}\bm{W}^{\text{KV}},
\quad
\bm{W}^{\text{KV}}\in\mathbb{R}^{d\times (gd_h)},
\quad
\tens{V}^{\text{C}}=\operatorname{Reshape}\left(\overline{\bm{V}}, \, \left[n, \, g, \, d_h\right]\right).
\end{equation*}
We cache $\tens{V}^{\text{C}}$ and $\bm{K}^{\text{RoPE}}$ during inference. We repeat $\tens{V}^{\text{C}}$ along the head axis with $r=h/g$ to form the final values:
\begin{equation*}
\tens{V}=\operatorname{RepeatInterleave}\left(\tens{V}^{\text{C}}, \, r, \, \text{dim}=1\right)\in\mathbb{R}^{n\times h\times d_h}.
\end{equation*}

We then form the final keys by tying the NoPE part of keys to the values and concatenating with the shared RoPE key:
\begin{equation*}
\small
\tens{K}=\operatorname{Concat}\left(\left[\tens{V}\left[:, \, :, \, :d_h-d_h^R\right],\ \operatorname{RepeatInterleave}\left(\tens{K}^{\text{RoPE}}, \, h, \, \text{dim=1}\right)\right], \, \text{dim=2}\right)\in\mathbb{R}^{n\times h\times d_h}.
\end{equation*}
The analysis of translation equivariance is similar to that of MLA.

\section{Llama-3 Architecture}
\label{appendix:llama3}
Given hidden states $\bm H\in\mathbb R^{n\times d}$ for a sequence of $n$ tokens, we first compute the attention output
\begin{equation*}
\begin{split}
\bm{H}^{\prime}&=\operatorname{RMSNorm}\left(\bm{H}\right)\\
\bm{O}^{\text{attn}}&=\operatorname{Attention}\left(\bm{H}^{\prime}\right)\in\mathbb R^{n\times \left(h d_h\right)},
\end{split}
\end{equation*}
then project back to the model dimension and add a residual:
\begin{equation*}
\bm{H} \leftarrow \bm{H} + \bm{O}^{\text{attn}}\bm{W}^{\text{O},\text{attn}},\qquad 
\bm{W}^{\text{O},\text{attn}}\in\mathbb{R}^{\left(h d_h\right)\times d}.
\end{equation*}
Next, an MLP block (gated form) is applied:
\begin{equation*}
\begin{split}
\bm{H}^{\prime}&=\operatorname{RMSNorm}\left(\bm{H}\right)\\
\bm{O}^{\text{mlp}}=\sigma\left(\bm{H}^{\prime} \bm{W}^1\right) &\odot\left(\bm{H}^{\prime} \bm{W}^2\right),\qquad 
\bm{W}^1,\bm{W}^2\in\mathbb R^{d\times d_f},
\end{split}
\end{equation*}
followed by the output projection and residual:
\begin{equation*}
\bm{H} \leftarrow \bm{H} + \bm{O}^{\text{mlp}}\bm{W}^{\text{O},\text{mlp}},\qquad 
\bm{W}^{\text{O},\text{mlp}}\in\mathbb R^{d_f\times d},
\end{equation*}
where $\sigma(\cdot)$ is an elementwise nonlinearity function such as SiLU and $\odot$ denotes elementwise multiplication.

\section{Gated Attention}
\label{appendix:gated_attention}
Given hidden states $\bm H\in\mathbb R^{n\times d}$ for a sequence of $n$ tokens, we first compute the attention output with gated score
\begin{equation*}
\begin{split}
\bm{G}&=\varsigma\left(\bm{H}\bm{W}^{\text{G}}\right)\\
\bm{H}^{\prime}&=\operatorname{RMSNorm}\left(\bm{H}\right)\\
\bm{O}^{\text{attn}}&=\operatorname{Attention}\left(\bm{H}^{\prime}\right)\odot\bm{G}\in\mathbb R^{n\times \left(h d_h\right)},
\end{split}
\end{equation*}
then project back to the model dimension and add a residual:
\begin{equation*}
\bm{H} \leftarrow \bm{H} + \bm{O}^{\text{attn}}\bm{W}^{\text{O},\text{attn}},\qquad 
\bm{W}^{\text{O},\text{attn}}\in\mathbb{R}^{\left(h d_h\right)\times d}.
\end{equation*}
Next, an MLP block (gated form) is applied:
\begin{equation*}
\begin{split}
\bm{H}^{\prime}&=\operatorname{RMSNorm}\left(\bm{H}\right)\\
\bm{O}^{\text{mlp}}=\sigma\left(\bm{H}^{\prime} \bm{W}^1\right) &\odot\left(\bm{H}^{\prime} \bm{W}^2\right),\qquad 
\bm{W}^1,\bm{W}^2\in\mathbb R^{d\times d_f},
\end{split}
\end{equation*}
followed by the output projection and residual:
\begin{equation*}
\bm{H} \leftarrow \bm{H} + \bm{O}^{\text{mlp}}\bm{W}^{\text{O},\text{mlp}},\qquad 
\bm{W}^{\text{O},\text{mlp}}\in\mathbb R^{d_f\times d},
\end{equation*}
where $\varsigma(\cdot)$ is an elementwise nonlinearity function such as sigmoid and $\odot$ denotes elementwise multiplication.
\section{Architectural Hyperparameters}
\subsection{Architectural Hyperparameters for Main Results}
\label{appendix:main_parameters}
Our model is based on the Llama-3 architecture, adopting a configuration largely consistent with Llama-3.2-3B but modified to 24 layers (down from 28). The architecture utilizes 24 attention heads, a model hidden dimension ($d$) of 3072, a head dimension ($d_h$) of 128, and an intermediate Feedforward Network (FFN) dimension ($d_f$) of 8192. The architectural hyperparameters for our baselines are aligned with their original implementations. Specifically, MLA is configured with latent dimensions $d_c^{\prime}=12d_h$, $d_c=4d_h$, and $d_h^R=0.5d_h$; GLA, along with our proposed MLRA, adopts $d_c^{\prime}=8d_h$, $d_c=4d_h$, and $d_h^R=0.5d_h$; and TPA uses ranks $\beta_{q}=6$ and $\beta_{kv}=2$. For GQA and GTA, we set the number of KV heads to $g=h/4$. We report the detailed architectural hyperparameters for our main experiments in Tables~\ref{tab:arch_mha_main},~\ref{tab:arch_mqa_main},~\ref{tab:arch_gqa_main},~\ref{tab:arch_mla_main},~\ref{tab:arch_mfa_main},~\ref{tab:arch_tpa_main},~\ref{tab:arch_gla_2_main},~\ref{tab:arch_gla_4_main},~\ref{tab:arch_gta_main},~\ref{tab:arch_mlra_2_main}, and~\ref{tab:arch_mlra_4_main}.
\begin{table}[H]
\caption{Model configuration of MHA for main results.}
\vspace{-0.1in}
\centering
\resizebox{\linewidth}{!}{
\setlength{\tabcolsep}{15pt}
\begin{tabular}{c | c c c c c c}
\toprule
\textbf{Model Size} & \textbf{\# Parameters} & \textbf{\# Layers} & \boldmath{$h$} & \boldmath{$d$}  & \boldmath{$d_h$} & \boldmath{$d_f$}\\
\midrule
2.9B & 2872.59M  & 24 & 24 & 3072 & 128 & 8192 \\
\bottomrule
\end{tabular}
}
\label{tab:arch_mha_main}
\vspace{-0.1in}
\end{table}

\begin{table}[H]
\caption{Model configuration of MQA for main results.}
\vspace{-0.1in}
\resizebox{\linewidth}{!}{
\centering
\setlength{\tabcolsep}{12pt}
\begin{tabular}{c | c c c c c c c}
\toprule
\textbf{Model Size} & \textbf{\# Parameters} & \textbf{\# Layers} & \boldmath{$h$} & \boldmath{$g$} & \boldmath{$d$}  & \boldmath{$d_h$} & \boldmath{$d_f$}\\
\midrule
2.9B & 2872.00M  & 24 & 24 & 1 & 3072 & 128 & 10152 \\
\bottomrule
\end{tabular}
}
\label{tab:arch_mqa_main}
\vspace{-0.1in}
\end{table}

\begin{table}[H]
\caption{Model configuration of GQA for main results.}
\vspace{-0.1in}
\resizebox{\linewidth}{!}{
\centering
\setlength{\tabcolsep}{12pt}
\begin{tabular}{c | c c c c c c c}
\toprule
\textbf{Model Size} & \textbf{\# Parameters} & \textbf{\# Layers} & \boldmath{$h$} & \boldmath{$g$} & \boldmath{$d$}  & \boldmath{$d_h$} & \boldmath{$d_f$}\\
\midrule
2.9B & 2872.59M  & 24 & 24 & 6 & 3072 & 128 & 9728 \\
\bottomrule
\end{tabular}
}
\label{tab:arch_gqa_main}
\vspace{-0.1in}
\end{table}

\begin{table}[H]
\caption{Model configuration of MLA for main results.}
\vspace{-0.1in}
\resizebox{\linewidth}{!}{
\centering
\setlength{\tabcolsep}{5pt}
\begin{tabular}{c | c c c c c c c c c c c}
\toprule
\textbf{Model Size} & \textbf{\# Parameters} & \textbf{\# Layers} & \boldmath{$h$} & \boldmath{$d_c^{\prime}$}  & \boldmath{$d_c$} & \boldmath{$d$} & \boldmath{$\alpha_q$} & \boldmath{$\alpha_{kv}$} & \boldmath{$d_h$} & \boldmath{$d_h^R$} & \boldmath{$d_f$} \\
\midrule
2.9B & 2872.05M  & 24 & 24 & 1536 & 512 & 3072 & $\sqrt{2}$ & $\sqrt{6}$ & 128 & 64 & 9448 \\
\bottomrule
\end{tabular}
}
\label{tab:arch_mla_main}
\vspace{-0.1in}
\end{table}

\begin{table}[H]
\caption{Model configuration of MFA for main results.}
\vspace{-0.1in}
\resizebox{\linewidth}{!}{
\centering
\setlength{\tabcolsep}{10pt}
\begin{tabular}{c | c c c c c c c c}
\toprule
\textbf{Model Size} & \textbf{\# Parameters} & \textbf{\# Layers} & \boldmath{$h$} & \boldmath{$g$} & \boldmath{$d_c^{\prime}$} & \boldmath{$d$}  & \boldmath{$d_h$} & \boldmath{$d_f$}\\
\midrule
2.9B & 2873.23M  & 24 & 24 & 1 & 2048 & 3072 & 256 & 8024 \\
\bottomrule
\end{tabular}
}
\label{tab:arch_mfa_main}
\vspace{-0.1in}
\end{table}

\begin{table}[H]
\caption{Model configuration of TPA for main results.}
\vspace{-0.1in}
\resizebox{\linewidth}{!}{
\centering
\setlength{\tabcolsep}{10pt}
\begin{tabular}{c | c c c c c c c c}
\toprule
\textbf{Model Size} & \textbf{\# Parameters} & \textbf{\# Layers} & \boldmath{$h$} & \boldmath{$\beta_{q}$} & \boldmath{$\beta_{kv}$} & \boldmath{$d$}  & \boldmath{$d_h$} & \boldmath{$d_f$}\\
\midrule
2.9B & 2873.18M  & 24 & 24 & 6 & 2 & 3072 & 128 & 10760 \\
\bottomrule
\end{tabular}
}
\label{tab:arch_tpa_main}
\vspace{-0.1in}
\end{table}

\begin{table}[H]
\caption{Model configuration of GLA-2 for main results.}
\vspace{-0.1in}
\resizebox{\linewidth}{!}{
\centering
\setlength{\tabcolsep}{3pt}
\begin{tabular}{c | c c c c c c c c c c c c}
\toprule
\textbf{Model Size} & \textbf{\# Parameters} & \textbf{\# Layers} & \boldmath{$h$} & \boldmath{$g$} & \boldmath{$d_c^{\prime}$} & \boldmath{$d_c$} & \boldmath{$d$} & \boldmath{$\alpha_q$} & \boldmath{$\alpha_{kv}$} & \boldmath{$d_h$} & \boldmath{$d_h^R$} & \boldmath{$d_f$} \\
\midrule
2.9B & 2872.63M  & 24 & 24 & 2 & 1024 & 512 & 3072 & $\sqrt{3}$ & $\sqrt{12}$ & 128 & 64 & 10048 \\
\bottomrule
\end{tabular}
}
\label{tab:arch_gla_2_main}
\vspace{-0.1in}
\end{table}

\begin{table}[H]
\caption{Model configuration of GLA-4 for main results.}
\vspace{-0.1in}
\resizebox{\linewidth}{!}{
\centering
\setlength{\tabcolsep}{5pt}
\begin{tabular}{c | c c c c c c c c c c c c}
\toprule
\textbf{Model Size} & \textbf{\# Parameters} & \textbf{\# Layers} & \boldmath{$h$} & \boldmath{$g$} & \boldmath{$d_c^{\prime}$} & \boldmath{$d_c$} & \boldmath{$d$} & \boldmath{$\alpha_q$} & \boldmath{$\alpha_{kv}$} & \boldmath{$d_h$} & \boldmath{$d_h^R$} & \boldmath{$d_f$} \\
\midrule
2.9B & 2873.22M  & 24 & 24 & 4 & 1024 & 512 & 3072 & $\sqrt{3}$ & $\sqrt{24}$  & 128 & 64 & 10136 \\
\bottomrule
\end{tabular}
}
\label{tab:arch_gla_4_main}
\vspace{-0.1in}
\end{table}

\begin{table}[H]
\caption{Model configuration of GTA for main results.}
\vspace{-0.1in}
\resizebox{\linewidth}{!}{
\centering
\setlength{\tabcolsep}{10pt}
\begin{tabular}{c | c c c c c c c c}
\toprule
\textbf{Model Size} & \textbf{\# Parameters} & \textbf{\# Layers} & \boldmath{$h$} & \boldmath{$g$} & \boldmath{$d$}  & \boldmath{$d_h$} & \boldmath{$d_h^R$} & \boldmath{$d_f$}\\
\midrule
2.9B & 2872.00M  & 24 & 24 & 6 & 3072 & 128 & 64 & 9960 \\
\bottomrule
\end{tabular}
}
\label{tab:arch_gta_main}
\vspace{-0.1in}
\end{table}

\begin{table}[H]
\caption{Model configuration of MLRA-2 for main results.}
\vspace{-0.1in}
\resizebox{\linewidth}{!}{
\centering
\setlength{\tabcolsep}{3pt}
\begin{tabular}{c | c c c c c c c c c c c c}
\toprule
\textbf{Model Size} & \textbf{\# Parameters} & \textbf{\# Layers} & \boldmath{$h$} & \boldmath{$d_c^{\prime}$} & \boldmath{$d_c$} & \boldmath{$d$} & \boldmath{$\alpha_q$} & \boldmath{$\alpha_{kv}$} & \boldmath{$\alpha_{attn}$} & \boldmath{$d_h$} & \boldmath{$d_h^R$} & \boldmath{$d_f$}  \\
\midrule
2.9B & 2872.63M  & 24 & 24 & 1024 & 512 & 3072 & $\sqrt{3}$ & $\sqrt{24}$ & $\frac{\sqrt{2}}{2}$  & 128 & 64 & 10048 \\
\bottomrule
\end{tabular}
}
\label{tab:arch_mlra_2_main}
\vspace{-0.1in}
\end{table}

\begin{table}[H]
\caption{Model configuration of MLRA-4 for main results.}
\vspace{-0.1in}
\resizebox{\linewidth}{!}{
\centering
\setlength{\tabcolsep}{3pt}
\begin{tabular}{c | c c c c c c c c c c c c}
\toprule
\textbf{Model Size} & \textbf{\# Parameters} & \textbf{\# Layers} & \boldmath{$h$} & \boldmath{$d_c^{\prime}$} & \boldmath{$d_c$} & \boldmath{$d$} & \boldmath{$\alpha_q$} & \boldmath{$\alpha_{kv}$} & \boldmath{$\alpha_{attn}$} & \boldmath{$d_h$} & \boldmath{$d_h^R$} & \boldmath{$d_f$}   \\
\midrule
2.9B & 2873.22M  & 24 & 24 & 1024 & 512 & 3072 & $\sqrt{3}$ & $\sqrt{24}$ & $\frac{1}{2}$ & 128 & 64 & 9880  \\
\bottomrule
\end{tabular}
}
\label{tab:arch_mlra_4_main}
\end{table}
\subsection{Architectural Hyperparameters for Initialization Ablation Study}
\label{appendix:initialization_parameters}
In our initialization ablation study, we focus on the initialization of the attention and FFN output projections parameters ($\bm{W}^{\text{O, attn}}, \bm{W}^{\text{O, mlp}}$). We evaluate two distinct initialization strategies: zero initialization versus a Gaussian distribution $\mathcal{N}(0, \sigma=0.02)$, to identify which yields better performance. To isolate the impact of the initialization strategy, the model architecture and all other hyperparameters are kept identical to those used for our main results. We report the detailed architectural hyperparameters for our initialization ablation in Tables~\ref{tab:arch_mha_initialization},~\ref{tab:arch_mqa_initialization},~\ref{tab:arch_gqa_initialization},~\ref{tab:arch_mla_initialization},~\ref{tab:arch_mfa_initialization},~\ref{tab:arch_tpa_initialization},~\ref{tab:arch_gla_2_initialization},~\ref{tab:arch_gla_4_initialization}, and~\ref{tab:arch_gta_initialization}.
\begin{table}[H]
\caption{Model configuration of MHA for initialization ablation.}
\vspace{-0.1in}
\centering
\resizebox{\linewidth}{!}{
\setlength{\tabcolsep}{15pt}
\begin{tabular}{c | c c c c c c}
\toprule
\textbf{Model Size} & \textbf{\# Parameters} & \textbf{\# Layers} & \boldmath{$h$} & \boldmath{$d$}  & \boldmath{$d_h$} & \boldmath{$d_f$}\\
\midrule
2.9B & 2872.59M  & 24 & 24 & 3072 & 128 & 8192 \\
\bottomrule
\end{tabular}
}
\label{tab:arch_mha_initialization}
\vspace{-0.1in}
\end{table}

\begin{table}[H]
\caption{Model configuration of MQA for initialization ablation.}
\vspace{-0.1in}
\resizebox{\linewidth}{!}{
\centering
\setlength{\tabcolsep}{12pt}
\begin{tabular}{c | c c c c c c c}
\toprule
\textbf{Model Size} & \textbf{\# Parameters} & \textbf{\# Layers} & \boldmath{$h$} & \boldmath{$g$} & \boldmath{$d$}  & \boldmath{$d_h$} & \boldmath{$d_f$}\\
\midrule
2.9B & 2872.00M  & 24 & 24 & 1 & 3072 & 128 & 10152 \\
\bottomrule
\end{tabular}
}
\label{tab:arch_mqa_initialization}
\vspace{-0.1in}
\end{table}

\begin{table}[H]
\caption{Model configuration of GQA for initialization ablation.}
\vspace{-0.1in}
\resizebox{\linewidth}{!}{
\centering
\setlength{\tabcolsep}{12pt}
\begin{tabular}{c | c c c c c c c}
\toprule
\textbf{Model Size} & \textbf{\# Parameters} & \textbf{\# Layers} & \boldmath{$h$} & \boldmath{$g$} & \boldmath{$d$}  & \boldmath{$d_h$} & \boldmath{$d_f$}\\
\midrule
2.9B & 2872.59M  & 24 & 24 & 6 & 3072 & 128 & 9728 \\
\bottomrule
\end{tabular}
}
\label{tab:arch_gqa_initialization}
\vspace{-0.1in}
\end{table}

\begin{table}[H]
\caption{Model configuration of MLA for initialization ablation.}
\vspace{-0.1in}
\resizebox{\linewidth}{!}{
\centering
\setlength{\tabcolsep}{5pt}
\begin{tabular}{c | c c c c c c c c c c c}
\toprule
\textbf{Model Size} & \textbf{\# Parameters} & \textbf{\# Layers} & \boldmath{$h$} & \boldmath{$d_c^{\prime}$}  & \boldmath{$d_c$} & \boldmath{$d$} & \boldmath{$\alpha_q$} & \boldmath{$\alpha_{kv}$} & \boldmath{$d_h$} & \boldmath{$d_h^R$} & \boldmath{$d_f$} \\
\midrule
2.9B & 2872.05M  & 24 & 24 & 1536 & 512 & 3072 & $\sqrt{2}$ & $\sqrt{6}$ & 128 & 64 & 9448 \\
\bottomrule
\end{tabular}
}
\label{tab:arch_mla_initialization}
\vspace{-0.1in}
\end{table}

\begin{table}[H]
\caption{Model configuration of MFA for initialization ablation.}
\vspace{-0.1in}
\resizebox{\linewidth}{!}{
\centering
\setlength{\tabcolsep}{10pt}
\begin{tabular}{c | c c c c c c c c}
\toprule
\textbf{Model Size} & \textbf{\# Parameters} & \textbf{\# Layers} & \boldmath{$h$} & \boldmath{$g$} & \boldmath{$d_c^{\prime}$} & \boldmath{$d$}  & \boldmath{$d_h$} & \boldmath{$d_f$}\\
\midrule
2.9B & 2873.23M  & 24 & 24 & 1 & 2048 & 3072 & 256 & 8024 \\
\bottomrule
\end{tabular}
}
\label{tab:arch_mfa_initialization}
\vspace{-0.1in}
\end{table}

\begin{table}[H]
\caption{Model configuration of TPA for initialization ablation.}
\vspace{-0.1in}
\resizebox{\linewidth}{!}{
\centering
\setlength{\tabcolsep}{10pt}
\begin{tabular}{c | c c c c c c c c}
\toprule
\textbf{Model Size} & \textbf{\# Parameters} & \textbf{\# Layers} & \boldmath{$h$} & \boldmath{$\beta_{q}$} & \boldmath{$\beta_{kv}$} & \boldmath{$d$}  & \boldmath{$d_h$} & \boldmath{$d_f$}\\
\midrule
2.9B & 2873.18M  & 24 & 24 & 6 & 2 & 3072 & 128 & 10760 \\
\bottomrule
\end{tabular}
}
\label{tab:arch_tpa_initialization}
\vspace{-0.1in}
\end{table}

\begin{table}[H]
\caption{Model configuration of GLA-2 for initialization ablation.}
\vspace{-0.1in}
\resizebox{\linewidth}{!}{
\centering
\setlength{\tabcolsep}{3pt}
\begin{tabular}{c | c c c c c c c c c c c c}
\toprule
\textbf{Model Size} & \textbf{\# Parameters} & \textbf{\# Layers} & \boldmath{$h$} & \boldmath{$g$} & \boldmath{$d_c^{\prime}$} & \boldmath{$d_c$} & \boldmath{$d$} & \boldmath{$\alpha_q$} & \boldmath{$\alpha_{kv}$} & \boldmath{$d_h$} & \boldmath{$d_h^R$} & \boldmath{$d_f$} \\
\midrule
2.9B & 2872.63M  & 24 & 24 & 2 & 1024 & 512 & 3072 & $\sqrt{3}$ & $\sqrt{12}$ & 128 & 64 & 10048 \\
\bottomrule
\end{tabular}
}
\label{tab:arch_gla_2_initialization}
\vspace{-0.1in}
\end{table}

\begin{table}[H]
\caption{Model configuration of GLA-4 for initialization ablation.}
\vspace{-0.1in}
\resizebox{\linewidth}{!}{
\centering
\setlength{\tabcolsep}{5pt}
\begin{tabular}{c | c c c c c c c c c c c c}
\toprule
\textbf{Model Size} & \textbf{\# Parameters} & \textbf{\# Layers} & \boldmath{$h$} & \boldmath{$g$} & \boldmath{$d_c^{\prime}$} & \boldmath{$d_c$} & \boldmath{$d$} & \boldmath{$\alpha_q$} & \boldmath{$\alpha_{kv}$} & \boldmath{$d_h$} & \boldmath{$d_h^R$} & \boldmath{$d_f$} \\
\midrule
2.9B & 2873.22M  & 24 & 24 & 4 & 1024 & 512 & 3072 & $\sqrt{3}$ & $\sqrt{24}$  & 128 & 64 & 10136 \\
\bottomrule
\end{tabular}
}
\label{tab:arch_gla_4_initialization}
\vspace{-0.1in}
\end{table}

\begin{table}[H]
\caption{Model configuration of GTA for initialization ablation.}
\vspace{-0.1in}
\resizebox{\linewidth}{!}{
\centering
\setlength{\tabcolsep}{10pt}
\begin{tabular}{c | c c c c c c c c}
\toprule
\textbf{Model Size} & \textbf{\# Parameters} & \textbf{\# Layers} & \boldmath{$h$} & \boldmath{$g$} & \boldmath{$d$}  & \boldmath{$d_h$} & \boldmath{$d_h^R$} & \boldmath{$d_f$}\\
\midrule
2.9B & 2872.00M  & 24 & 24 & 6 & 3072 & 128 & 64 & 9960 \\
\bottomrule
\end{tabular}
}
\label{tab:arch_gta_initialization}
\end{table}
\subsection{Architectural Hyperparameters for Scaling Ablation Study}
In our scaling ablation study, we investigate the impact of the scaling factors $\alpha_{q}$, $\alpha_{kv}$, and $\alpha_{attn}$ applied to the query latent states ($\bm{C}^{\text{Q}}$), the KV latent states ($\bm{C}^{\text{KV}}$), and the final attention output ($\tens{O}$), respectively. To determine the optimal configuration, we compare the model's performance with and without these scaling factors, where the `without' setting corresponds to fixing $\alpha_{q}$, $\alpha_{kv}$, and $\alpha_{attn}$ to 1. To isolate the impact of this scaling strategy, the model architecture and all other hyperparameters remain identical to those used in our main results. Detailed architectural specifications for these ablation experiments are provided in Tables~\ref{tab:arch_mla_scaling},~\ref{tab:arch_gla_2_scaling}, and~\ref{tab:arch_mlra_2_scaling}.
\begin{table}[H]
\caption{Model configuration of MLA in the absence of scaling factors.}
\vspace{-0.1in}
\resizebox{\linewidth}{!}{
\centering
\setlength{\tabcolsep}{5pt}
\begin{tabular}{c | c c c c c c c c c c c}
\toprule
\textbf{Model Size} & \textbf{\# Parameters} & \textbf{\# Layers} & \boldmath{$h$} & \boldmath{$d_c^{\prime}$}  & \boldmath{$d_c$} & \boldmath{$d$} & \boldmath{$\alpha_q$} & \boldmath{$\alpha_{kv}$} & \boldmath{$d_h$} & \boldmath{$d_h^R$} & \boldmath{$d_f$} \\
\midrule
2.9B & 2872.05M  & 24 & 24 & 1536 & 512 & 3072 & $1$ & $1$ & 128 & 64 & 9448 \\
\bottomrule
\end{tabular}
}
\label{tab:arch_mla_scaling}
\vspace{-0.1in}
\end{table}

\begin{table}[H]
\caption{Model configuration of GLA-2 in the absence of scaling factors.}
\vspace{-0.1in}
\resizebox{\linewidth}{!}{
\centering
\setlength{\tabcolsep}{3pt}
\begin{tabular}{c | c c c c c c c c c c c c}
\toprule
\textbf{Model Size} & \textbf{\# Parameters} & \textbf{\# Layers} & \boldmath{$h$} & \boldmath{$g$} & \boldmath{$d_c^{\prime}$} & \boldmath{$d_c$} & \boldmath{$d$} & \boldmath{$\alpha_q$} & \boldmath{$\alpha_{kv}$} & \boldmath{$d_h$} & \boldmath{$d_h^R$} & \boldmath{$d_f$} \\
\midrule
2.9B & 2872.63M  & 24 & 24 & 2 & 1024 & 512 & 3072 & $1$ & $1$ & 128 & 64 & 10048 \\
\bottomrule
\end{tabular}
}
\label{tab:arch_gla_2_scaling}
\vspace{-0.1in}
\end{table}

\begin{table}[H]
\caption{Model configuration of MLRA-2 in the absence of scaling factors.}
\vspace{-0.1in}
\resizebox{\linewidth}{!}{
\centering
\setlength{\tabcolsep}{3pt}
\begin{tabular}{c | c c c c c c c c c c c c}
\toprule
\textbf{Model Size} & \textbf{\# Parameters} & \textbf{\# Layers} & \boldmath{$h$} & \boldmath{$d_c^{\prime}$} & \boldmath{$d_c$} & \boldmath{$d$} & \boldmath{$\alpha_q$} & \boldmath{$\alpha_{kv}$} & \boldmath{$\alpha_{attn}$} & \boldmath{$d_h$} & \boldmath{$d_h^R$} & \boldmath{$d_f$}  \\
\midrule
2.9B & 2872.63M  & 24 & 24 & 1024 & 512 & 3072 & $1$ & $1$ & $1$  & 128 & 64 & 10048 \\
\bottomrule
\end{tabular}
}
\label{tab:arch_mlra_2_scaling}
\end{table}
\subsection{Architectural Hyperparameters for Double Heads Ablation Study}
\label{appendix:double_heads_parameters}
In our head-count ablation study, we investigate whether doubling the number of attention heads for GQA, MLA, and GLA-2 improves performance. Specifically, we increase the number of heads to 48 while maintaining the original KV cache size. To maintain parameter parity with our main results, we decrease the Feed-Forward Network (FFN) intermediate dimension. By keeping all other hyperparameters identical, we isolate the specific impact of the doubled head count. The detailed architectural specifications for these experiments are provided in Tables~\ref{tab:arch_gqa_double},~\ref{tab:arch_mla_double}, and~\ref{tab:arch_gla_2_double}.
\begin{table}[H]
\caption{Model configuration of GQA parameterized with $2 \times$ attention heads.}
\vspace{-0.1in}
\resizebox{\linewidth}{!}{
\centering
\setlength{\tabcolsep}{12pt}
\begin{tabular}{c | c c c c c c c}
\toprule
\textbf{Model Size} & \textbf{\# Parameters} & \textbf{\# Layers} & \boldmath{$h$} & \boldmath{$g$} & \boldmath{$d$}  & \boldmath{$d_h$} & \boldmath{$d_f$}\\
\midrule
2.9B & 2872.59M  & 24 & 48 & 6 & 3072 & 128 & 7680 \\
\bottomrule
\end{tabular}
}
\label{tab:arch_gqa_double}
\vspace{-0.1in}
\end{table}

\begin{table}[H]
\caption{Model configuration of MLA parameterized with $2 \times$ attention heads.}
\vspace{-0.1in}
\resizebox{\linewidth}{!}{
\centering
\setlength{\tabcolsep}{5pt}
\begin{tabular}{c | c c c c c c c c c c c}
\toprule
\textbf{Model Size} & \textbf{\# Parameters} & \textbf{\# Layers} & \boldmath{$h$} & \boldmath{$d_c^{\prime}$}  & \boldmath{$d_c$} & \boldmath{$d$} & \boldmath{$\alpha_q$} & \boldmath{$\alpha_{kv}$} & \boldmath{$d_h$} & \boldmath{$d_h^R$} & \boldmath{$d_f$} \\
\midrule
2.9B & 2873.23M  & 24 & 48 & 1536 & 512 & 3072 & $\sqrt{2}$ & $\sqrt{6}$ & 128 & 64 & 7320 \\
\bottomrule
\end{tabular}
}
\label{tab:arch_mla_double}
\vspace{-0.1in}
\end{table}

\begin{table}[H]
\caption{Model configuration of GLA-2 parameterized with $2 \times$ attention heads.}
\vspace{-0.1in}
\resizebox{\linewidth}{!}{
\centering
\setlength{\tabcolsep}{3pt}
\begin{tabular}{c | c c c c c c c c c c c c}
\toprule
\textbf{Model Size} & \textbf{\# Parameters} & \textbf{\# Layers} & \boldmath{$h$} & \boldmath{$g$} & \boldmath{$d_c^{\prime}$} & \boldmath{$d_c$} & \boldmath{$d$} & \boldmath{$\alpha_q$} & \boldmath{$\alpha_{kv}$} & \boldmath{$d_h$} & \boldmath{$d_h^R$} & \boldmath{$d_f$} \\
\midrule
2.9B & 2873.22M  & 24 & 48 & 2 & 1024 & 512 & 3072 & $\sqrt{3}$ & $\sqrt{12}$ & 128 & 64 & 8344 \\
\bottomrule
\end{tabular}
}
\label{tab:arch_gla_2_double}
\end{table}

\subsection{Architectural Hyperparameters for Gated Attention Study}
\label{appendix:gated_parameters}
In our gated attention study, we investigate whether incorporating a gating mechanism~\citep{hochreiter1997long,srivastava2015highway,dey2017gate,qiu2025gated} into GQA, MLA, GLA-2, MLRA-2, and MLRA-4 improves performance. Specifically, we integrate gated attention into these architectures as shown in Appendix~\ref{appendix:gated_attention}. To maintain parameter parity with our main results, we proportionally decrease the Feed-Forward Network (FFN) intermediate dimension to offset the additional gate parameters. By keeping all other hyperparameters identical, we isolate the specific impact of the gating strategy. The detailed architectural specifications for these experiments are provided in Tables~\ref{tab:arch_gqa_gated},~\ref{tab:arch_mla_gated},~\ref{tab:arch_gla_2_gated},~\ref{tab:arch_mlra_2_gated}, and~\ref{tab:arch_mlra_4_gated}.
\begin{table}[H]
\caption{Model configuration of GQA incorporating gated attention.}
\vspace{-0.1in}
\resizebox{\linewidth}{!}{
\centering
\setlength{\tabcolsep}{12pt}
\begin{tabular}{c | c c c c c c c}
\toprule
\textbf{Model Size} & \textbf{\# Parameters} & \textbf{\# Layers} & \boldmath{$h$} & \boldmath{$g$} & \boldmath{$d$}  & \boldmath{$d_h$} & \boldmath{$d_f$}\\
\midrule
2.9B & 2872.59M  & 24 & 24 & 6 & 3072 & 128 & 8704 \\
\bottomrule
\end{tabular}
}
\label{tab:arch_gqa_gated}
\vspace{-0.1in}
\end{table}

\begin{table}[H]
\caption{Model configuration of MLA incorporating gated attention.}
\vspace{-0.1in}
\resizebox{\linewidth}{!}{
\centering
\setlength{\tabcolsep}{5pt}
\begin{tabular}{c | c c c c c c c c c c c}
\toprule
\textbf{Model Size} & \textbf{\# Parameters} & \textbf{\# Layers} & \boldmath{$h$} & \boldmath{$d_c^{\prime}$}  & \boldmath{$d_c$} & \boldmath{$d$} & \boldmath{$\alpha_q$} & \boldmath{$\alpha_{kv}$} & \boldmath{$d_h$} & \boldmath{$d_h^R$} & \boldmath{$d_f$} \\
\midrule
2.9B & 2872.05M  & 24 & 24 & 1536 & 512 & 3072 & $\sqrt{2}$ & $\sqrt{6}$ & 128 & 64 & 8424 \\
\bottomrule
\end{tabular}
}
\label{tab:arch_mla_gated}
\vspace{-0.1in}
\end{table}

\begin{table}[H]
\caption{Model configuration of GLA-2 incorporating gated attention.}
\vspace{-0.1in}
\resizebox{\linewidth}{!}{
\centering
\setlength{\tabcolsep}{3pt}
\begin{tabular}{c | c c c c c c c c c c c c}
\toprule
\textbf{Model Size} & \textbf{\# Parameters} & \textbf{\# Layers} & \boldmath{$h$} & \boldmath{$g$} & \boldmath{$d_c^{\prime}$} & \boldmath{$d_c$} & \boldmath{$d$} & \boldmath{$\alpha_q$} & \boldmath{$\alpha_{kv}$} & \boldmath{$d_h$} & \boldmath{$d_h^R$} & \boldmath{$d_f$} \\
\midrule
2.9B & 2872.63M  & 24 & 24 & 2 & 1024 & 512 & 3072 & $\sqrt{3}$ & $\sqrt{12}$ & 128 & 64 & 9024 \\
\bottomrule
\end{tabular}
}
\label{tab:arch_gla_2_gated}
\vspace{-0.1in}
\end{table}

\begin{table}[H]
\caption{Model configuration of MLRA-2 incorporating gated attention.}
\vspace{-0.1in}
\resizebox{\linewidth}{!}{
\centering
\setlength{\tabcolsep}{3pt}
\begin{tabular}{c | c c c c c c c c c c c c}
\toprule
\textbf{Model Size} & \textbf{\# Parameters} & \textbf{\# Layers} & \boldmath{$h$} & \boldmath{$d_c^{\prime}$} & \boldmath{$d_c$} & \boldmath{$d$} & \boldmath{$\alpha_q$} & \boldmath{$\alpha_{kv}$} & \boldmath{$\alpha_{attn}$} & \boldmath{$d_h$} & \boldmath{$d_h^R$} & \boldmath{$d_f$}  \\
\midrule
2.9B & 2872.63M  & 24 & 24 & 1024 & 512 & 3072 & $\sqrt{3}$ & $\sqrt{24}$ & $\frac{\sqrt{2}}{2}$  & 128 & 64 & 9024 \\
\bottomrule
\end{tabular}
}
\label{tab:arch_mlra_2_gated}
\vspace{-0.1in}
\end{table}

\begin{table}[H]
\caption{Model configuration of MLRA-4 incorporating gated attention.}
\vspace{-0.1in}
\resizebox{\linewidth}{!}{
\centering
\setlength{\tabcolsep}{3pt}
\begin{tabular}{c | c c c c c c c c c c c c}
\toprule
\textbf{Model Size} & \textbf{\# Parameters} & \textbf{\# Layers} & \boldmath{$h$} & \boldmath{$d_c^{\prime}$} & \boldmath{$d_c$} & \boldmath{$d$} & \boldmath{$\alpha_q$} & \boldmath{$\alpha_{kv}$} & \boldmath{$\alpha_{attn}$} & \boldmath{$d_h$} & \boldmath{$d_h^R$} & \boldmath{$d_f$}   \\
\midrule
2.9B & 2873.22M  & 24 & 24 & 1024 & 512 & 3072 & $\sqrt{3}$ & $\sqrt{24}$ & $\frac{1}{2}$ & 128 & 64 & 8856  \\
\bottomrule
\end{tabular}
}
\label{tab:arch_mlra_4_gated}
\end{table}

\section{Additional Experimental Results}
\begin{table}[H]
\caption{Validation perplexity (lower is better) across seven datasets: Wikipedia, C4, Pile, RefinedWeb, Cosmopedia, FineWeb, and FineWeb-Edu.  We compare two initialization strategies, zero versus Gaussian ($\mathcal{N}(0, \sigma=0.02)$), applied to the output projection weights $\bm{W}^{\text{O, attn}}$ and $\bm{W}^{\text{O, mlp}}$.}
\vspace{-0.1in}
\resizebox{\linewidth}{!}{
\centering
\setlength{\tabcolsep}{5pt}
\begin{tabular}{c | c | c c c c c c c | c}
\toprule
\textbf{Method} & \textbf{Initialization} & \textbf{Wikipedia} & \textbf{C4} & \textbf{Pile} & \textbf{RefinedWeb} & \textbf{Cosmopedia} & \textbf{FineWeb} & \textbf{FineWeb-Edu} & \textbf{Avg} \\
\midrule
MHA & $\mathcal{N}(0,\sigma=0.02)$ & 14.759 & 16.800 & 13.282 & 18.988 & 9.356 & 15.904 & 9.571 & 14.094 \\
MHA & zero                         & 14.624 & 16.575 & 12.929 & 18.698 & 9.102 & 15.656 & 9.434 & 13.860 \\
\midrule
MQA & $\mathcal{N}(0,\sigma=0.02)$ & 14.708 & 17.075 & 13.500 & 19.301 & 9.510 & 16.190 & 9.697 & 14.283 \\
MQA & zero                         & 15.134 & 16.837 & 14.008 & 19.202 & 9.484 & 15.942 & 9.533 & 14.306 \\
\midrule
GQA & $\mathcal{N}(0,\sigma=0.02)$ & 14.687 & 16.882 & 13.528 & 19.084 & 9.422 & 15.974 & 9.571 & 14.164 \\
GQA & zero                         & 15.057 & 16.628 & 13.758 & 18.885 & 9.504 & 15.713 & 9.427 & 14.139 \\
\midrule
MLA & $\mathcal{N}(0,\sigma=0.02)$ & 14.571 & 16.624 & 13.113 & 18.837 & 9.110 & 15.740 & 9.490 & 13.927 \\
MLA & zero                         & 14.567 & 16.345 & 12.965 & 18.523 & 8.966 & 15.440 & 9.284 & 13.727 \\
\midrule
MFA & $\mathcal{N}(0,\sigma=0.02)$ & 15.123 & 17.032 & 13.752 & 19.133 & 9.550 & 16.138 & 9.707 & 14.374 \\
MFA & zero                         & 15.693 & 16.738 & 13.903 & 19.125 & 9.423 & 15.815 & 9.506 & 14.315 \\
\midrule
TPA & $\mathcal{N}(0,\sigma=0.02)$ & 15.205 & 17.128 & 13.814 & 19.445 & 9.844 & 16.227 & 9.682 & 14.478 \\
TPA & zero                         & 14.789 & 16.622 & 13.333 & 18.971 & 9.130 & 15.717 & 9.333 & 13.985 \\
\midrule
GLA-2 & $\mathcal{N}(0,\sigma=0.02)$ & 14.717 & 16.675 & 13.216 & 18.886 & 9.259 & 15.799 & 9.510 & 14.009 \\
GLA-2 & zero                         & 14.605 & 16.323 & 13.225 & 18.509 & 9.118 & 15.424 & 9.249 & 13.779 \\
\midrule
GLA-4 & $\mathcal{N}(0,\sigma=0.02)$ & 14.858 & 16.791 & 13.522 & 18.953 & 9.374 & 15.914 & 9.571 & 14.140 \\
GLA-4 & zero                         & 14.547 & 16.436 & 13.229 & 18.578 & 9.076 & 15.535 & 9.307 & 13.815 \\
\midrule
GTA & $\mathcal{N}(0,\sigma=0.02)$ & 14.896 & 16.959 & 13.621 & 19.277 & 9.536 & 16.061 & 9.647 & 14.285 \\
GTA & zero                         & 14.733 & 16.599 & 13.402 & 18.924 & 9.129 & 15.672 & 9.346 & 13.972 \\
\bottomrule
\end{tabular}
}
\label{tab:ablation_initialization}
\end{table}

\begin{table}[H]
\caption{Validation perplexity (lower is better) across seven datasets: Wikipedia, C4, Pile, RefinedWeb, Cosmopedia, FineWeb, and FineWeb-Edu.  This analysis specifically compares models without and with scaling.}
\vspace{-0.1in}
\resizebox{\linewidth}{!}{
\centering
\setlength{\tabcolsep}{5pt}
\begin{tabular}{c | c | c c c c c c c | c}
\toprule
\textbf{Method} & \textbf{Scaling} & \textbf{Wikipedia} & \textbf{C4} & \textbf{Pile} & \textbf{RefinedWeb} & \textbf{Cosmopedia} & \textbf{FineWeb} & \textbf{FineWeb-Edu} & \textbf{Avg} \\
\midrule
MLA & w/o & 14.461 & 16.386 & 13.218 & 18.636 & 8.961 & 15.485 & 9.307 & 13.779 \\
MLA & w/  & 14.567 & 16.345 & 12.965 & 18.523 & 8.966 & 15.440 & 9.284 & 13.727 \\
\midrule
GLA-2 & w/o & 14.518 & 16.467 & 13.179 & 18.612 & 9.138 & 15.565 & 9.305 & 13.827 \\
GLA-2 & w/  & 14.605 & 16.323 & 13.225 & 18.509 & 9.118 & 15.424 & 9.249 & 13.779 \\
\midrule
MLRA-2 & w/o & 14.326 & 16.485 & 13.145 & 18.657 & 9.168 & 15.570 & 9.304 & 13.808 \\
MLRA-2 & w/  & 14.615 & 16.342 & 13.236 & 18.602 & 9.153 & 15.439 & 9.242 & 13.804 \\
\bottomrule
\end{tabular}
}
\label{tab:ablation_scale}
\end{table}

\begin{table}[H]
\caption{Validation perplexity (lower is better) across seven datasets: Wikipedia, C4, Pile, RefinedWeb, Cosmopedia, FineWeb, and FineWeb-Edu. This analysis specifically compares models with and without $2 \times$ attention heads.}
\vspace{-0.1in}
\resizebox{\linewidth}{!}{
\centering
\setlength{\tabcolsep}{5pt}
\begin{tabular}{c | c | c c c c c c c | c}
\toprule
\textbf{Method} & \textbf{$2 \times$ Attention Heads} & \textbf{Wikipedia} & \textbf{C4} & \textbf{Pile} & \textbf{RefinedWeb} & \textbf{Cosmopedia} & \textbf{FineWeb} & \textbf{FineWeb-Edu} & \textbf{Avg} \\
\midrule
GQA & w/  & 15.280 & 16.702 & 13.789 & 18.961 & 9.486 & 15.785 & 9.490 & 14.213 \\
GQA & w/o & 15.057 & 16.628 & 13.758 & 18.885 & 9.504 & 15.713 & 9.427 & 14.139 \\
\midrule
MLA & w/  & 14.771 & 16.432 & 13.108 & 18.615 & 9.029 & 15.529 & 9.371 & 13.836 \\
MLA & w/o & 14.567 & 16.345 & 12.965 & 18.523 & 8.966 & 15.440 & 9.284 & 13.727 \\
\midrule
GLA-2 & w/  & 14.969 & 16.313 & 13.428 & 18.569 & 8.991 & 15.410 & 9.281 & 13.851 \\
GLA-2 & w/o & 14.605 & 16.323 & 13.225 & 18.509 & 9.118 & 15.424 & 9.249 & 13.779 \\
\bottomrule
\end{tabular}
}
\label{tab:ablation_double_heads}
\end{table}

\section{Illustration}
\begin{figure}[H]
    \centering
        \centering
        \includegraphics[width=0.479\textwidth]{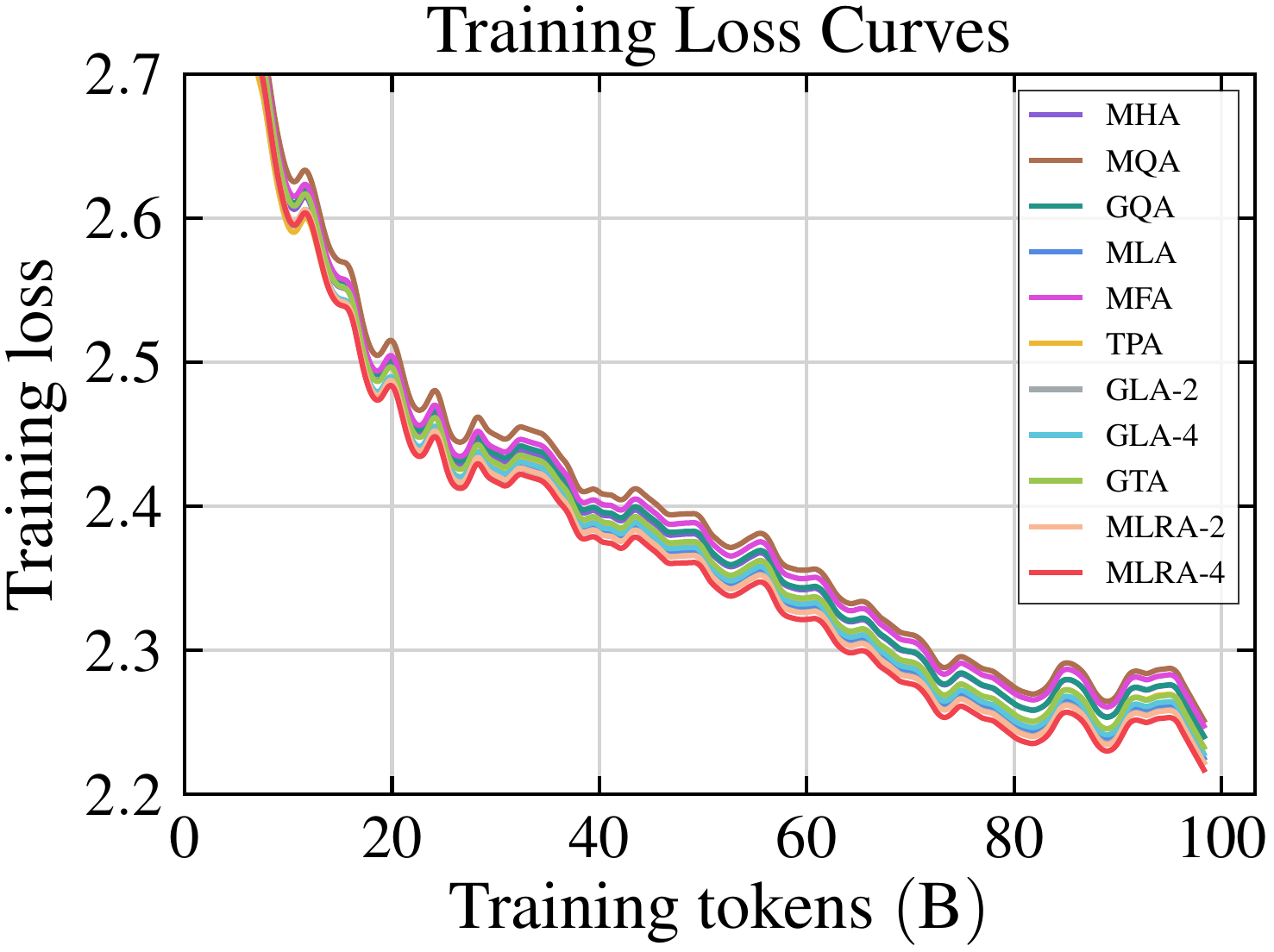}
    \caption{Training loss curves for all models.}
    \label{fig:loss}
\end{figure}
\begin{figure}[H]
    \centering
        \centering
        \includegraphics[width=\textwidth]{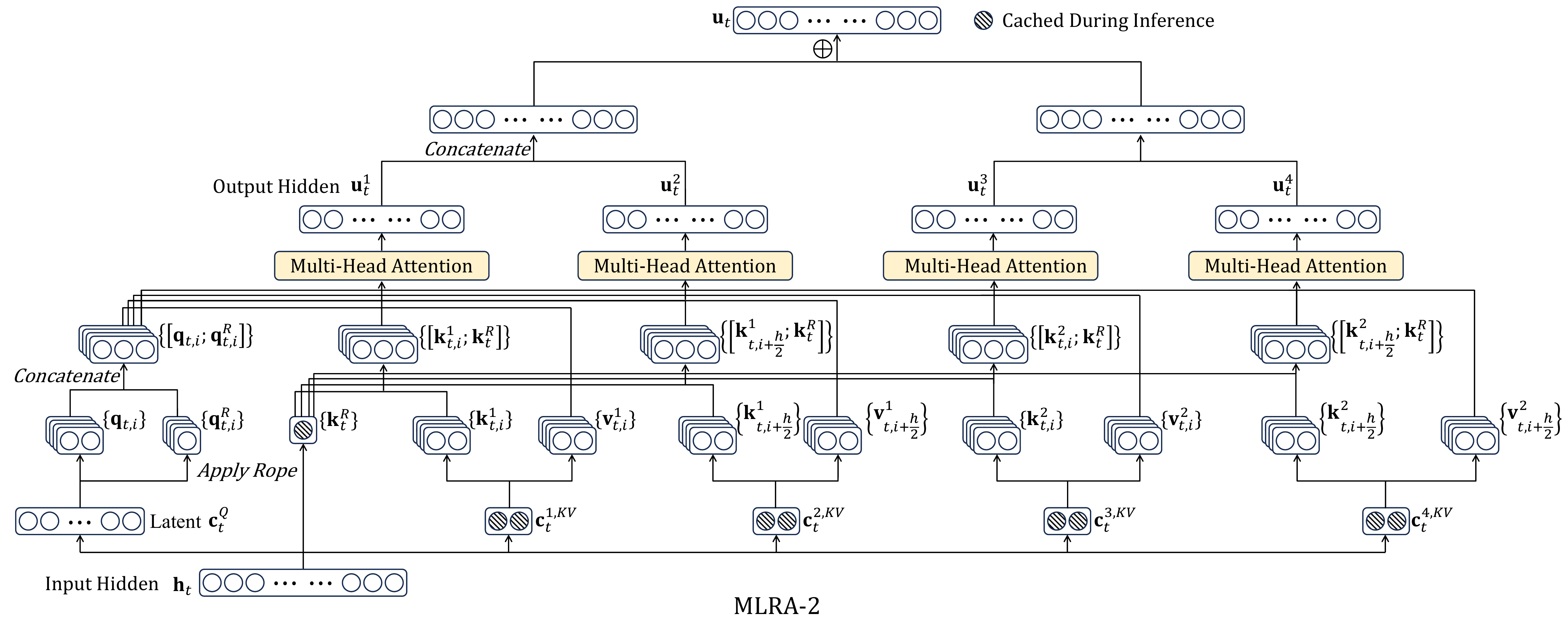}
        \vspace{-0.15in}
    \caption{Illustration of MLRA-2.}
    \label{fig:mlra_2}
\end{figure}
\begin{figure}[H]
    \centering
        \centering
        \includegraphics[width=\textwidth]{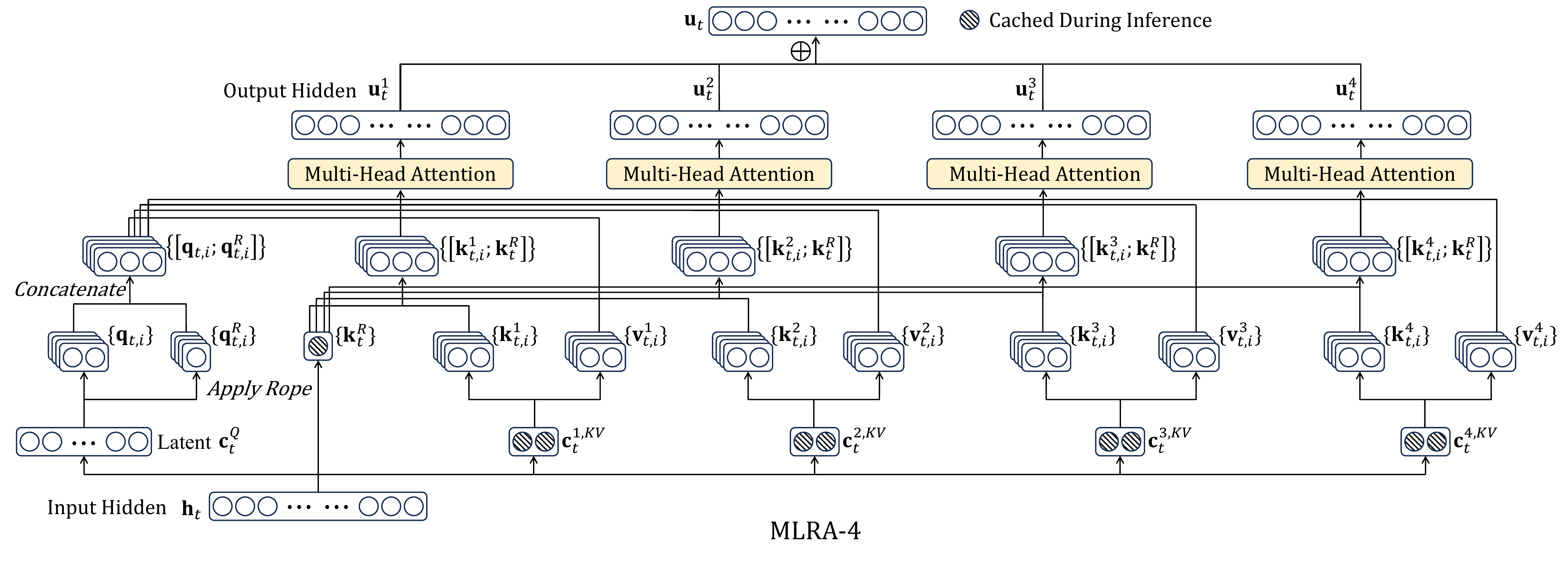}
        \vspace{-0.15in}
    \caption{Illustration of MLRA-4.}
    \label{fig:mlra_4}
\end{figure}
\section{Related Work}
\paragraph{KV Cache Compression.} 
Recent works~\citep{liu2023scissorhands,anagnostidis2023dynamic,zhang2023h2o,ge2024model,xiao2024efficient,kim2024compressed,zhang2024cam,nawrot2024dynamic,tang2024quest,liu24kivi,dong2024get,cai2024lococo,liu2024cachegen,hooper2024kvquant,sun2024you,chen2024arkvale,jiang2024minference,li2024snapkv,xiao2025duoattention,sun2025shadowkv,meng2025transmla,tang2025tpla} don't introduce new attention mechanisms; instead, they compress the KV cache for pretrained models. Some of these works~\citep{liu24kivi,hooper2024kvquant} use quantization to store the KV cache in low-bit formats. Some other approaches~\citep{zhang2023h2o,xiao2024efficient,li2024snapkv,xiao2025duoattention} retain important tokens and discard others to compress the KV cache.

\paragraph{Low-Rank Approximation.} 
Low-rank approximation~\citep{hu2022lora,malladi2023kernel,zhang2023adaptive,dettmers2023qlora,lialin2024relora,zhu2024asymmetry,zeng2024the,chen2024longlora,lin2025modegpt,wang2025svdllm,chang2025palu} are widely used to compress representations to a low-dimensional space, then up-project to recover full representations. These methods greatly reduce trainable parameters~\citep{hu2022lora,dettmers2023qlora} during fine-tuning and decrease the number of parameters~\citep{lin2025modegpt,wang2025svdllm} for pretrained models.

\paragraph{System for Attention.} FlashAttention~\citep{dao2022flashattention,dao2023flashattention2,shah2024flashattention} uses tiling and online softmax to minimize reads and writes between high-bandwidth memory and on-chip SRAM, shifting attention from a memory bottleneck to a compute bottleneck. FlashMLA~\citep{flashmla2025} avoids explicit KV materialization during attention decoding by absorbing the key up-projection matrices into the queries. The following attention computation is similar to MQA with shared KV states. Inspired by classical virtual memory and paging in operating systems, PagedAttention~\citep{kwon2023efficient} and vLLM use block-level memory management and preemptive request scheduling to reduce fragmentation and redundant duplication.

\paragraph{Linear Attention.} 
Linear attention~\citep{katharopoulos2020transformers,peng2021random,schlag2021linear,gu2022efficiently,smith2023simplified,sun2023retentive,qin2023hierarchically,yang2024gated,dao2024transformers,peng2024eagle,gu2024mamba,beck2024xlstm,zhang2024gated,yang2024parallelizing,yang2025gated} reformulates the attention mechanism by substituting the exponential kernel in softmax with a dot product between the query and key vectors. It reduces the memory complexity per decoding step from $\mathcal{O}\left(n\right)$ for full attention to $\mathcal{O}\left(1\right)$.

\end{document}